\newlist{todolist}{itemize}{2}
\setlist[todolist]{label=$\square$}
\newcommand{\trace}[1]{\mathrm{Tr}\left(#1\right)}
\newcommand{\rank}[1]{\mathrm{rank}\left(#1\right)}
\newcommand{\frob}[1]{\| #1 \|_F}
\newcommand{\R}{\mathbb{R}}
\newcommand{\E}{\mathbb{E}}
\newcommand{\x}{\mathbf{x}}
\newcommand{\y}{\mathbf{y}}
\newcommand{\W}{\mathbf{W}}
\newcommand{\X}{\mathbf{X}}
\newcommand{\Z}{\mathbf{Z}}
\newcommand{\scriptF}{\mathcal{F}}
\newcommand{\scriptG}{\mathcal{G}}
\newcommand{\scriptR}{\mathcal{R}}
\newcommand{\scriptX}{\mathcal{X}}
\newcommand{\A}{\mathbf{A}}
\newcommand{\norm}[1]{\left \| #1 \right \|}
\newcommand{\Hcal}{\mathcal{H}}
\newcommand{\Rcal}{\mathcal{R}}
\title{On Generalization Bounds for Neural \\Networks with Low Rank Layers}
\author{%
  Andrea Pinto$^{1}$\thanks{\url{pintoa@mit.edu}}, Akshay Rangamani$^{1,2}$\thanks{\url{akshay.rangamani@njit.edu}}, Tomaso Poggio$^1$\thanks{\url{tp@csail.mit.edu}} \\
  ${}^1$Center for Brains, Minds and Machines, MIT \\
  ${}^2$Department of Data Science, NJIT \\
 }
\date{}
\begin{document}

\maketitle

\begin{abstract}
While previous optimization results have suggested that deep neural networks tend to favour low-rank weight matrices, the implications of this inductive bias on generalization bounds remain underexplored. In this paper, we apply a chain rule for Gaussian complexity \citep{maurer2016chain} to analyze how low-rank layers in deep networks can prevent the accumulation of rank and dimensionality factors that typically multiply across layers. This approach yields generalization bounds for rank and spectral norm constrained networks. We compare our results to prior generalization bounds for deep networks, highlighting how deep networks with low-rank layers can achieve better generalization than those with full-rank layers. Additionally, we discuss how this framework provides new perspectives on the generalization capabilities of deep networks exhibiting neural collapse.
\end{abstract}

\section{Introduction}

Deep learning has achieved remarkable success across a wide range of applications, including computer vision \citep{7780459, DBLP:journals/corr/SimonyanZ14a}, natural language processing \citep{NIPS2017_3f5ee243, NEURIPS2020_1457c0d6}, decision-making in novel environments \citep{SilverHuangEtAl16nature}, and code generation \citep{chen2021evaluating}, among others. Understanding the reasons behind the effectiveness of deep learning is a multi-faceted challenge that involves questions about architectural choices, optimizer selection, and the types of inductive biases that can guarantee generalization.
\\\\
A long-standing question in this field is how deep learning finds solutions that generalize well. While good generalization performance by overparameterized models is not unique to deep learning—it can be explained by the implicit bias of learning algorithms towards low-norm solutions in linear models and kernel machines \citep{bartlett2020benign, muthukumar2020harmless}—the case of deep learning presents additional challenges. However in the case of deep learning, identifying the right implicit bias and obtaining generalization bounds that depend on this bias are still open questions.
\\\\
In recent years, Rademacher bounds have been developed to explain the complexity control induced by an important bias in deep network training: the minimization of weight matrix norms. This minimization occurs due to explicit or implicit regularization \citep{neyshabur2015norm, bartlett2017spectrally, poggio2020complexity, xu2023dynamics}. For rather general network architectures, \citet{golowich2018size} showed that the Rademacher complexity is linear in the product of the Frobenius norms of the various layers. Although the associated bounds are usually orders of magnitude larger than the generalization gap for dense networks, very recent results by \citet{Galanti2023NormbasedGB} demonstrate that for networks with structural sparsity in their weight matrices, such as convolutional networks, norm-based Rademacher bounds approach non-vacuity.
\\\\
An alternative hypothesis suggests that the correct implicit bias for stochastic gradient descent (SGD) may be towards low-rank weight matrices. It is empirically well established that training of deep networks induces a bias towards low rank of the weight matrices \citep{galanti2022characterizing, huh2023lowrank}. In addition to such a bias for linear networks (in the matrix factorization problem \citep{Gunasekar2017ImplicitRI, Ji2020DirectionalCA}), an empirical study by \citet{GurAri2018GradientDH} found that during minimization SGD spans a small subspace, implying an effective bias on the rank of the weight matrices.  Additionally, \citet{Timor2022ImplicitRT} discovered that in sufficiently deep ReLU networks, when fitting the data, the weight matrices in the topmost layers become low-rank at the global minimum. This observation is also related to the property of Neural Collapse \citep{papyan2020prevalence,xu2023dynamics,Rangamani2023FeatureLI} where the features and weights in the top layers of a deep network ``collapse'' to low rank (rank $1$ in the case of binary classification) structures. 
However, we are not aware of a generally accepted theoretical statement explaining the role of low rank in generalization. 

\paragraph{Our contributions.} We compute the Gaussian complexity \citep{bartlett2002rademacher} of deep networks with rank-constrained layers and obtain a bound on the generalization error that takes advantage of rank-based complexity control. Our key insight is that by applying Maurer's chain rule for Gaussian complexity \cite{maurer2016chain} (recently extended to Rademacher complexity \citep{chu2023chain}), we can avoid rank and dimensionality factors multiplying across layers in deep networks. 
As a result, we obtain a generalization bound for rank-constrained deep networks that depends on the rank $r$, depth $L$, and width $h$ as $\mathcal{O} \left(\prod_{i=1}^L \|\W_i\|_2 C_1^L  Lr \sqrt{\frac{h}{m}} \right)$. 
For the class of rank-constrained deep networks, our bound improves upon the size-independent generalization bounds obtained by \citet{golowich2018size}.

\paragraph{Outline.} We first introduce a few basic tools of statistical learning theory in section \ref{sec:preliminaries}, and define the class of spectral norm and rank constrained deep networks that we are primarily interested in. In section \ref{sec:vector_gauss} we show how existing proof techniques when applied to the class of rank constrained functions, do not yield bounds that depend on the rank. Our main results are presented in section \ref{sec:main}. We discuss how our results compare to other norm based generalization bounds in section \ref{sec:discussion} including their strengths and limitations. We conclude in section \ref{sec:conclusion}.

\section{Preliminaries} \label{sec:preliminaries}

\paragraph{Base settings.} We consider the problem of training deep classifiers mapping inputs in $\scriptX \subseteq \R^d$ to labels in $\mathcal{Y} \subseteq \R^C$. We are studying deep neural networks which are written in the standard form as $f_\W (\x) = \W_L \phi (\W_{L-1} \phi( \ldots \phi (\W_1 \x)\ldots ))$ where $\W_i \in \R^{h_{i} \times h_{i-1}}$ is the weight matrix corresponding to layer $i$ of the deep network and $\phi: \R \rightarrow \R$ is a Lipschitz activation function that is applied coordinate-wise. We consider in this paper the Lipschitz constant of the activation to be $1$, which is the case for common activation functions in deep learning like ReLU and Leaky ReLU.

We will use $\mathcal{F}_L$ to denote the class of spectral norm and rank bounded $L$-layer deep networks;
\begin{align}
\label{class:function}
\mathcal{F}_L &:= \left\{ f_\W (\x) = \W_L \phi (\W_{L-1} \phi( \ldots \phi (\W_1 \x)\ldots )) \mid \|\W_i\|_2 \leq B_i, \textrm{rank}(\W_i) \leq r_i  \right\}
\end{align}
This function class is different from the Frobenius norm bounded class of functions considered in \citet{golowich2018size}. Working with the class of spectral norm bounded functions allows us to obtain generalization bounds that depend on the rank of the weight matrices instead of the Frobenius norm.

\paragraph{Gaussian and Rademacher complexity.} Empirical Gaussian and Rademacher complexities, denoted as $\hat{\mathcal{G}}_S(\cdot)$ and $\hat{\scriptR}_S(.)$ respectively, are two common measures used in statistical learning theory to compute generalization bounds. They measure the size a function class $\mathcal{F}$ by mapping it to the set of images of a sample $S = \{\x_1, \ldots, \x_m\}$, i.e., the set $\{[f(\x_1, \ldots f(\x_m) | f \in \scriptF \} \subseteq \R^m$, and subsequently measuring its Gaussian or Rademacher width;
\begin{align*}
\hat{\scriptG}_S(\mathcal{F}) = \frac{1}{m} \mathbb{E}_\gamma \sup_{f \in \mathcal{F}} \sum_{i=1}^m \gamma_i f(\x_i)
\quad \textrm{and} \quad
\hat{\scriptR}_S(\mathcal{F}) = \frac{1}{m} \mathbb{E}_\sigma \sup_{f \in \mathcal{F}} \sum_{i=1}^m \sigma_i f(\x_i)
\end{align*}

where $\gamma = \{ \gamma_i \}_{i=1}^m$ is a sequence of Gaussian variables $\gamma_i \sim \mathcal{N}(0,1)$ and $\sigma = \{ \sigma_i \}_{i=1}^m$ a sequence of Rademacher variables $\mathbb{P}(\sigma_i = + 1) = \mathbb{P}(\sigma_i = - 1) = 1/2$. The measures evaluate the capacity of a function class to fit random noise patterns, by measuring its correlation with a random sequence.

While most generalization bounds are based on Rademacher complexity, the two measures are nearly interchangeable and only differ by a constant factor $\hat{\scriptR}_S(\scriptF) \leq \sqrt{\pi/2} \hat{\scriptG}_S(\scriptF)$. While the empirical complexities depend on the sample $S$, we can also take an expectation wrt. the sample to obtain the true Gaussian or Rademacher complexities $ \scriptG_m(\scriptF) = \E_S [ \hat{\scriptG}_S(\scriptF) ]$, $\scriptR_m(\scriptF) = \E_S [ \hat{\scriptR}_S(\scriptF) ]$.

\paragraph{Generalization in statistical learning.} In machine learning we estimate the parameters of a model $f_\W$ through the minimization of a loss function $g(\mathbf{z}) = \ell (f_\W(\x), \y)$  on a training set $S = \{(\x_i, \y_i)\}_{i=1}^m$. This training set is assumed to be drawn i.i.d from a distribution $\mathcal{D}$ on $\scriptX \times \mathcal{Y}$. While we minimize the loss on the training set, our goal is to find a solution that generalizes to unseen data drawn from the same distribution. The gap between the performance of a model on new, unseen data and its performance on the training set is called the generalization error $\epsilon_{\text{gen}}$ or generalization gap of the model.
\begin{equation*}
\epsilon_{\text{gen}} := \E_{\mathbf{z} \sim \mathcal{D}}\left[ g(\mathbf{z}) \right] - \frac{1}{m} \sum_{i=1}^m g(\mathbf{z}_i)
\end{equation*}

Statistical learning theory is concerned with providing guarantees on the generalization error of the models we obtain from a given training dataset and procedure. One of the classical approaches for proving generalization is the uniform convergence approach which aims to bound the generalization gap by the size of the hypothesis class to which the model belongs. Different notions of the size such as the VC dimension and Rademacher complexity have been used to provide bounds on the generalization gap for statistical learning problems \cite{shalev2014understanding, mohri2018foundations}. Since in this paper we compute the Gaussian complexity of rank-constrained deep networks, we state the following result from \cite{bartlett2002rademacher} that shows how the generalization gap of a deep network is bounded by its Gaussian complexity.

\begin{theorem}[Gaussian Complexity Generalization Bound]
\label{thm:gauss:gen:bound}
Let $\mathcal{H}$ be a function class of hypotheses composed with losses mapping from $\mathcal{Z} = \scriptX \times \mathcal{Y}$ to $[0, 1]$. Then, for any $\delta > 0$, with probability at least $1 - \delta$ over the draw of an i.i.d. sample $S$ of size $m$, the following holds for all $g \in \mathcal{H}$:
\begin{equation*}
\begin{split}
\E[g(\mathbf{z})] &\leq \frac{1}{m} \sum_{i=1}^{m} g(\mathbf{z}_i) + \sqrt{2\pi} \hat{\mathcal{G}}_S(\mathcal{H}) + \sqrt{\frac{9 \log \frac{2}{\delta}}{2m}} \\
\end{split}
\end{equation*}
\end{theorem}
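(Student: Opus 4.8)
The plan is to follow the classical symmetrization argument behind Theorem~\ref{thm:gauss:gen:bound}, taking care that the final deviation term is expressed through the \emph{empirical} Gaussian complexity $\hat{\mathcal{G}}_S(\mathcal{H})$ rather than its expectation. Write $\Phi(S) := \sup_{g \in \mathcal{H}}\bigl(\E[g(\mathbf{z})] - \frac{1}{m}\sum_{i=1}^m g(\mathbf{z}_i)\bigr)$; the goal is to show $\Phi(S) \le \sqrt{2\pi}\,\hat{\mathcal{G}}_S(\mathcal{H}) + \sqrt{9\log(2/\delta)/(2m)}$ with probability at least $1-\delta$. The argument has four moves: (i) concentrate $\Phi(S)$ around $\E_S[\Phi(S)]$; (ii) bound $\E_S[\Phi(S)]$ by the true Rademacher complexity $\mathcal{R}_m(\mathcal{H})$; (iii) pass from Rademacher to Gaussian complexity; (iv) concentrate $\hat{\mathcal{G}}_S(\mathcal{H})$ around $\mathcal{G}_m(\mathcal{H})$.

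For (i), because every $g$ maps into $[0,1]$, altering a single coordinate of $S$ changes $\Phi(S)$ by at most $1/m$, so McDiarmid's bounded-differences inequality gives $\Phi(S) \le \E_S[\Phi(S)] + \sqrt{\log(2/\delta)/(2m)}$ with probability at least $1-\delta/2$. For (ii), I introduce a ghost sample $S' = \{\mathbf{z}_i'\}_{i=1}^m$ drawn i.i.d.\ from $\mathcal{D}$, rewrite $\E[g(\mathbf{z})]$ as $\E_{S'}[\frac1m\sum_i g(\mathbf{z}_i')]$, pass the supremum outside the inner expectation (the supremum over $g$ of an expectation is at most the expectation of the supremum), and then symmetrize with Rademacher signs using the exchangeability of $\mathbf{z}_i$ and $\mathbf{z}_i'$; this yields $\E_S[\Phi(S)] \le 2\,\mathcal{R}_m(\mathcal{H})$. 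Step (iii) is immediate from the comparison $\hat{\mathcal{R}}_S(\mathcal{F}) \le \sqrt{\pi/2}\,\hat{\mathcal{G}}_S(\mathcal{F})$ quoted in the preliminaries (take expectations over $S$), giving $2\,\mathcal{R}_m(\mathcal{H}) \le \sqrt{2\pi}\,\mathcal{G}_m(\mathcal{H})$.

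For (iv), apply McDiarmid once more, now to $S \mapsto \hat{\mathcal{G}}_S(\mathcal{H}) = \frac1m\E_\gamma \sup_{g}\sum_i \gamma_i g(\mathbf{z}_i)$: replacing one coordinate changes this by at most $\frac1m\E_\gamma|\gamma_i| = \frac1m\sqrt{2/\pi}$, again using $g\in[0,1]$, so with probability at least $1-\delta/2$ we get $\mathcal{G}_m(\mathcal{H}) \le \hat{\mathcal{G}}_S(\mathcal{H}) + \sqrt{2/\pi}\,\sqrt{\log(2/\delta)/(2m)}$. A union bound over the two failure events, together with the identity $\sqrt{2\pi}\cdot\sqrt{2/\pi} = 2$, collapses the residual terms into $(1+2)\sqrt{\log(2/\delta)/(2m)} = \sqrt{9\log(2/\delta)/(2m)}$, which is exactly the claimed bound.

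The main obstacle is step (ii): one must verify that inserting the ghost sample and the Rademacher signs is legitimate for the supremum over a general (possibly uncountable) class $\mathcal{H}$, and that the factor $2$ survives when the symmetrized term is split back into two copies of $\mathcal{R}_m(\mathcal{H})$. The only other point needing attention is constant bookkeeping, so that the two bounded-difference contributions assemble precisely into $\sqrt{9\log(2/\delta)/(2m)}$ rather than a looser expression; everything else is a direct invocation of McDiarmid's inequality and the already-stated Rademacher–Gaussian comparison.
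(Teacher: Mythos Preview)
Your proposal is correct. The paper does not actually give its own proof of Theorem~\ref{thm:gauss:gen:bound}; it quotes the result from \citet{bartlett2002rademacher} and, in the appendix, states the corresponding empirical Rademacher bound (Theorem~\ref{thm:rad:gen:bound}) also without proof. The intended derivation is therefore just: take the second inequality of Theorem~\ref{thm:rad:gen:bound}, namely $\E[g(\mathbf{z})]\le \frac1m\sum_i g(\mathbf{z}_i)+2\hat{\mathcal{R}}_S(\mathcal{H})+3\sqrt{\log(2/\delta)/(2m)}$, and apply the comparison $\hat{\mathcal{R}}_S\le\sqrt{\pi/2}\,\hat{\mathcal{G}}_S$ stated in the preliminaries directly at the empirical level; this gives the $\sqrt{2\pi}\,\hat{\mathcal{G}}_S$ term and the $\sqrt{9\log(2/\delta)/(2m)}$ term immediately.

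Your route reconstructs the underlying argument from scratch, with one twist: you pass from Rademacher to Gaussian at the level of the \emph{expected} complexities (step~(iii)) and then run a second McDiarmid step on $S\mapsto\hat{\mathcal{G}}_S(\mathcal{H})$ with bounded-difference constant $\tfrac1m\E|\gamma|=\tfrac1m\sqrt{2/\pi}$, rather than concentrating $\hat{\mathcal{R}}_S$ (bounded-difference constant $1/m$) and converting afterward. Both orderings land on the same constants because $\sqrt{2\pi}\cdot\sqrt{2/\pi}=2$ matches the factor $2$ one would pick up from the Rademacher concentration; your bookkeeping is right and the final $3\sqrt{\log(2/\delta)/(2m)}=\sqrt{9\log(2/\delta)/(2m)}$ drops out either way. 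The measurability caveat you flag in step~(ii) is the only genuine technicality, and it is handled in the usual way (separability or a countable dense subclass), exactly as in the source the paper cites.
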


\section{Gaussian Complexity for Rank-Dependent Bounds}\label{sec:vector_gauss}

\paragraph{Norm based definitions.} Prior work on generalization bounds has typically focused on deep networks where the weights layers are bounded in the Frobenius norm $||.||_F$. They usually proceed by deriving a process to ``peel'' the layers of a deep network one by one in order to obtain a bound on the Rademacher complexity of the whole network. For example Lemma 1 of \citet{golowich2018size} makes use of the following peeling step;
\begin{align*}
\E_\sigma \left[ \sup_{f\in \scriptF; \W: \frob{\W} \leq B} g \left( \norm{ \sum_{i=1}^m \sigma_i \phi(\W f(\x_i)) } \right)  \right] \leq 2 \cdot \E_\sigma \left[ \sup_{f\in \scriptF} g \left( B \norm{ \sum_{i=1}^m \sigma_i f(\x_i) } \right)  \right]
\end{align*}

which suggests a norm-based definition of Gaussian complexity for vector valued functions

\begin{equation}\label{eqn:norm-rad}
\hat{\scriptG}_S^{\textrm{norm}} (\scriptF) = \E_\gamma \left[ \sup_{f\in \scriptF} \norm{ \frac{1}{m} \sum_{i=1}^m \gamma_i f(\x_i)} \right] .
\end{equation}

\paragraph{The problem to capture the effect of rank.} This norm-based definition of Gaussian complexity however does not help us obtain generalization bounds that depend on the rank of functions. We can illustrate this with an example from the simple linear case. Consider the following classes of linear functions bounded in spectral norm, $\scriptF = \{f_\W (\x) = \W \x | \norm{\W}_2 \leq B\}$ and $\scriptF_1 = \{f_\W (\x) = \W \x | \norm{\W}_2 \leq B; \rank{\W} = 1\}$. We can show that $\hat{\scriptG}^\textrm{norm}_S(\scriptF_1)$ = $\hat{\scriptG}^\textrm{norm}_S(\scriptF)$. To see this let us denote $\mathbf{v} = \frac{1}{m} \sum_{i=1}^m \gamma_i \x_i$. Then $\hat{\scriptG}^\textrm{norm}_S(\scriptF_1) = \E_\gamma \sup_{\|\W \|_2 \leq B} \|\W \mathbf{v}\|_2$. If we set $\W = \frac{B}{\|\mathbf{v}\|^2_2} \mathbf{v}\mathbf{v}^\top$ we find that $\W$ is rank-$1$ and maximizes $\|\W \mathbf{v}\|_2$. We finally obtain the bound $\hat{\scriptG}^\textrm{norm}_S(\scriptF_1) \leq \frac{ B \max_i \|\x_i\|_2}{\sqrt{m}}$. We obtain a similar upper bound for $\hat{\scriptG}^\textrm{norm}_S(\scriptF)$ even though we allow for full rank matrices. Hence this norm-based definition of Gaussian complexity for vector-valued functions does not help us distinguish between functions with different rank constraints, even though allowing for higher rank functions should mean that we have a more expressive function class.

\paragraph{A suitable vector valued Gaussian complexity definition.} In this paper, we instead use the following definition of Gaussian complexity for vector valued functions, that introduces Gaussian variables for each components of the function's output representations. For a class of functions $\scriptF = \{f: \R^d \rightarrow \R^k \}$; 

\begin{equation}\label{eqn:vect-rad}
\hat{\scriptG}_S (\scriptF) = \E_\gamma \left[ \sup_{f\in \scriptF} \frac{1}{m} \sum_{i=1, j=1}^{m,k} \gamma_{ji} f_j(\x_i)  \right] 
\end{equation}
where $f_j$ denotes the $j^\textrm{th}$ output coordinate of $f$.
This definition appears in \citet{maurer2016vector} and is shown to satisfy a vector contraction inequality under composition with $\mathcal{L}$-Lipschitz functions. This vector contraction inequality proves useful when bounding the composition of a class of networks with a loss function (see Theorem \ref{thm:contraction:rad}).
\begin{align*}
\mathbb{E} \sup_{f \in F} \sum_i \gamma_i g_i(f(\x_i)) \leq \sqrt{2\mathcal{L}} \mathbb{E} \sup_{f \in F} \sum_{i,j} \gamma_{ij} f_j(\x_i)
\end{align*}

As we will show in subsequent section, this definition of Gaussian complexity also allows us to obtain generalization bounds that are sensitive to the rank of linear functions, as well as deep networks. We will make use of the following theorem that puts together a few standard results to obtain a generalization bound for vector-valued hypothesis classes.

\begin{theorem}[Vector-valued Gaussian complexity Generalization Bound]
\label{thm:contraction:rad}
Let $\scriptF$ be a family of neural networks mapping from $\scriptX$ to $\mathcal{Z}$ and let $\mathcal{H}$ be a family of $\mathcal{L}$-Lipschitz functions mapping from $\mathcal{Z}$ to $[0, 1]$. Then, for any $\delta > 0$, with probability at least $1 - \delta$ over the draw of an i.i.d. sample $S$ of size $m$, the following holds for all $g \in \mathcal{H}$ and $f \in \scriptF$:
\begin{equation*}
\begin{split}
\mathbb{E}[g(f(\x))] & \leq \frac{1}{m} \sum_{i=1}^{m} g(f(\x_i)) + \sqrt{\mathcal{L}\pi}\hat{\scriptG}_S(\scriptF) + \sqrt{\frac{9 \log \frac{2}{\delta}}{2m}}
\end{split}
\end{equation*}
\end{theorem}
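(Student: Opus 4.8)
The plan is to assemble Theorem~\ref{thm:contraction:rad} from three standard ingredients, composed in the natural order. First I would invoke the symmetrization bound of \citet{bartlett2002rademacher} (the version already recorded here as Theorem~\ref{thm:gauss:gen:bound}) applied to the composite class $\mathcal{H} \circ \scriptF = \{ \x \mapsto g(f(\x)) \mid g \in \mathcal{H},\, f \in \scriptF \}$, which maps $\scriptX$ into $[0,1]$ by hypothesis. This yields, with probability at least $1-\delta$, the inequality
\begin{equation*}
\mathbb{E}[g(f(\x))] \leq \frac{1}{m}\sum_{i=1}^m g(f(\x_i)) + \sqrt{2\pi}\,\hat{\scriptG}_S(\mathcal{H}\circ\scriptF) + \sqrt{\frac{9\log\frac{2}{\delta}}{2m}}
\end{equation*}
for all $g \in \mathcal{H}$ and $f \in \scriptF$ simultaneously, since the bound is uniform over the composite class.

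The second and central step is to pass from $\hat{\scriptG}_S(\mathcal{H}\circ\scriptF)$ to $\hat{\scriptG}_S(\scriptF)$ using the vector contraction inequality of \citet{maurer2016vector}, stated in the excerpt just above the theorem. Writing $\hat{\scriptG}_S(\mathcal{H}\circ\scriptF) = \frac{1}{m}\,\E_\gamma \sup_{g,f} \sum_i \gamma_i\, g(f(\x_i))$ with scalar Gaussians $\gamma_i$ (the composed functions are scalar-valued into $[0,1]$), the contraction inequality with Lipschitz constant $\mathcal{L}$ gives
\begin{equation*}
\E_\gamma \sup_{g,f} \sum_i \gamma_i\, g(f(\x_i)) \leq \sqrt{2\mathcal{L}}\; \E_\gamma \sup_{f} \sum_{i,j} \gamma_{ij}\, f_j(\x_i),
\end{equation*}
so that $\hat{\scriptG}_S(\mathcal{H}\circ\scriptF) \leq \sqrt{2\mathcal{L}}\,\hat{\scriptG}_S(\scriptF)$ with the multi-index Gaussian complexity from \eqref{eqn:vect-rad}. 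The one subtlety here is that the supremum in $\mathcal{H}\circ\scriptF$ ranges over pairs $(g,f)$, whereas Maurer's inequality is usually phrased for a single class indexed by one parameter; this is handled by viewing the pair as the index and noting that the Lipschitz contraction applies coordinatewise in the $f$-argument uniformly over $g$, which is exactly the hypothesis that every $g \in \mathcal{H}$ is $\mathcal{L}$-Lipschitz.

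Finally I would substitute this into the symmetrization bound: $\sqrt{2\pi}\cdot\sqrt{2\mathcal{L}} = 2\sqrt{\pi\mathcal{L}}$, which is slightly larger than the claimed $\sqrt{\mathcal{L}\pi}$ — so in fact I expect the clean route is to apply the sharper form of the Gaussian symmetrization constant, or to absorb the factor; the honest statement to aim for has constant $2\sqrt{\pi\mathcal{L}}$ unless a tighter symmetrization lemma is used, and I would either cite such a lemma or adjust the constant in the theorem statement accordingly. The main obstacle is thus not conceptual but bookkeeping: tracking the numerical constants through symmetrization and contraction, and making sure the uniformity over the product class $\mathcal{H}\times\scriptF$ is legitimate at each step. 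Everything else is a direct concatenation of cited results.
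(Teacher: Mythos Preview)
Your proposal is correct and follows essentially the same route as the paper: apply the base generalization bound (Theorem~\ref{thm:gauss:gen:bound}, or its Rademacher analogue Theorem~\ref{thm:rad:gen:bound} in the appendix) to the composite class $\mathcal{H}\circ\scriptF$, then invoke Maurer's vector contraction inequality to pass from the complexity of $\mathcal{H}\circ\scriptF$ to that of $\scriptF$. Your observation about the constant is also apt: the paper's own proof of the Rademacher version in the appendix arrives at $2\sqrt{2\mathcal{L}}\,\hat{\scriptR}_S(\scriptF)$, which under the $\sqrt{\pi/2}$ comparison matches your $2\sqrt{\pi\mathcal{L}}$ rather than the stated $\sqrt{\mathcal{L}\pi}$, so the discrepancy you flag appears to be a typo in the theorem statement rather than a gap in your argument.
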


In the next section we will derive bounds on the Gaussian complexity (as defined in~\eqref{eqn:vect-rad}) of rank-constrained deep networks. We can plug these into the setting of theorem \ref{thm:contraction:rad} to obtain the corresponding generalization bounds.

\section{Gaussian Complexity Bound for Low Rank Deep Networks}\label{sec:main}

Traditional approaches to obtaining bounds on the Rademacher or Gaussian complexity of deep networks typically proceed by bounding the complexity of a single layer, then sequentially "peeling" the layers to derive a complexity bound for the entire network. However, this method fails to account for interactions across the layers of a deep network. In this section, we apply the chain rule developed by \citet{maurer2016chain} to obtain Gaussian complexity bounds for rank-constrained deep networks. We demonstrate how low-rank layers can prevent the multiplication of rank and dimension-dependent quantities across layers, leveraging this insight to establish a Gaussian complexity bound for low-rank deep networks in Theorem \ref{thm:gaussian:complexity}.

\subsection{Maurer's chain rule}
Our main result leverages a chain rule for the expected suprema of Gaussian processes developed by \citet{maurer2016chain}. This chain rule is key in our demonstration and serves as an alternative to \citet{golowich2018size} for "peeling" each layer of the deep neural network. Maurer's Gaussian complexity chain rule is presented as;
\begin{theorem}[Maurer Gaussian Complexity Chain Rule]
\label{thm:maurer:chain:rule}
Let $Y \subseteq \mathbb{R}^p$ be finite, $F$ a finite class of functions $f: Y \rightarrow \mathbb{R}^q$. Then there are universal constants $C_1$ and $C_2$ such that for any $\mathbf{y}_0 \in Y$
\begin{equation*}
G(F(Y)) \leq C_1 L(F) G(Y) + C_2 D(Y) R(F, Y) + G(F(\mathbf{y}_0))
\end{equation*}
\end{theorem}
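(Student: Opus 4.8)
Write $\gamma$ for a standard Gaussian vector of the relevant dimension, $G(S)=\E_\gamma\sup_{s\in S}\langle\gamma,s\rangle$ for the (un-normalized) Gaussian average of a set $S$, and recall that $L(F)=\sup_{f\in F}\sup_{y\neq y'}\norm{f(y)-f(y')}/\norm{y-y'}$, $D(Y)=\sup_{y,y'\in Y}\norm{y-y'}$, and $R(F,Y)$ measures the Gaussian average of the normalized difference quotients $\{(f(y)-f(y'))/\norm{y-y'}:f\in F,\ y\neq y'\in Y\}$. The plan is to (i) split off the base-point term by anchoring the Gaussian process at $\mathbf{y}_0$; (ii) peel a fixed reference function $f_\ast\in F$ and dispatch its contribution by a Sudakov--Fernique comparison; and (iii) control the remaining class-variation contribution by a dyadic chaining over $Y$ in which every scale is charged to $R(F,Y)$.

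For (i), using subadditivity of the supremum of a sum applied to $f(y)=f(\mathbf{y}_0)+(f(y)-f(\mathbf{y}_0))$,
\begin{equation*}
G(F(Y)) \;\le\; \E_\gamma\sup_{f\in F}\langle\gamma,f(\mathbf{y}_0)\rangle \;+\; \E_\gamma\sup_{f\in F,\,y\in Y}\langle\gamma,f(y)-f(\mathbf{y}_0)\rangle \;=\; G(F(\mathbf{y}_0)) + G(A),
\end{equation*}
with $A=\{f(y)-f(\mathbf{y}_0):f\in F,\,y\in Y\}$. For (ii), fix any $f_\ast\in F$ and write $f(y)-f(\mathbf{y}_0)=\big(f_\ast(y)-f_\ast(\mathbf{y}_0)\big)+\big(g_f(y)-g_f(\mathbf{y}_0)\big)$ with $g_f:=f-f_\ast$; subadditivity of the supremum then gives $G(A)\le G(f_\ast(Y))+G(\tilde A)$, where $\tilde A=\{g_f(y)-g_f(\mathbf{y}_0):f\in F,\,y\in Y\}$. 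The process $y\mapsto\langle\gamma,f_\ast(y)-f_\ast(\mathbf{y}_0)\rangle$ has increment variance $\norm{f_\ast(y)-f_\ast(y')}^2\le L(F)^2\norm{y-y'}^2$, so a Sudakov--Fernique comparison against $y\mapsto L(F)\langle\gamma',y\rangle$ yields $G(f_\ast(Y))\le L(F)\,G(Y)$, the first target term.

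For (iii), take nested nets $\{\mathbf{y}_0\}=N_0\subseteq N_1\subseteq\cdots\subseteq N_K=Y$ with $N_k$ an $\varepsilon_k$-net of $Y$ at scale $\varepsilon_k=2^{-k}D(Y)$ (the chain terminates because $Y$ is finite) and nearest-point maps $\pi_k:Y\to N_k$, so $\pi_0\equiv\mathbf{y}_0$, $\pi_K=\mathrm{id}$, and $\norm{\pi_{k+1}(y)-\pi_k(y)}\le 3\cdot 2^{-(k+1)}D(Y)=:\rho_k$. Since $g_f(\mathbf{y}_0)$ cancels in every difference $g_f(\pi_{k+1}(y))-g_f(\pi_k(y))$ and the chain telescopes,
\begin{equation*}
G(\tilde A)\;\le\;\sum_{k=0}^{K-1}\E_\gamma\sup_{f\in F,\,y\in Y}\big\langle\gamma,\,(f-f_\ast)(\pi_{k+1}(y))-(f-f_\ast)(\pi_k(y))\big\rangle .
\end{equation*}
In the $k$-th term I would write the increment as $\norm{a-b}\cdot\big(q^f(a,b)-q^{f_\ast}(a,b)\big)$ with $a=\pi_{k+1}(y)$, $b=\pi_k(y)$, $\norm{a-b}\le\rho_k$, and $q^f(a,b)=(f(a)-f(b))/\norm{a-b}$; because the set of difference quotients is symmetric under swapping $a\leftrightarrow b$, pulling out the scalar $\norm{a-b}\le\rho_k$ costs nothing (the positive-part truncation it induces is harmless), and $f_\ast\in F$ lets the remaining Gaussian average be bounded by a constant multiple of $R(F,Y)$. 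Summing the geometric series $\sum_k\rho_k=3D(Y)$ gives $G(\tilde A)\lesssim D(Y)\,R(F,Y)$, and collecting (i)--(iii) yields $G(F(Y))\le C_1 L(F)G(Y)+C_2 D(Y)R(F,Y)+G(F(\mathbf{y}_0))$.

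The step I expect to be delicate is the last one: how cheaply each chained scale can be charged to $R(F,Y)$ depends on the exact form of that quantity. If $R(F,Y)$ is the Gaussian average of the \emph{union} of all difference-quotient sets, the bound above is essentially immediate; but for the sharper ``worst-pair'' form of $R(F,Y)$ one must union-bound over the $\le|N_{k+1}|\,|N_k|$ net pairs, which introduces a metric-entropy factor $\sqrt{\log|N_k|}$ per scale, and the point is that its accumulation across scales collapses into $L(F)\,G(Y)$ rather than into a coarser Dudley integral. This reconciliation --- not the routine bookkeeping of net cardinalities, the geometric summation, or the tracking of the universal constants $C_1,C_2$ --- is the genuine technical content, and it is what forces the dyadic choice of scales and the appearance of the Lipschitz term.
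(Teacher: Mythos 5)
There is a genuine gap, and it sits exactly where you flag it. Your skeleton (anchor the process at $\mathbf{y}_0$, chain over $Y$, charge the per-scale means to $R(F,Y)$ and the per-scale fluctuations to $L(F)G(Y)$) is the right one, but the decisive step is not carried out, and with the ingredients you chose it cannot be. First, $R(F,Y)$ bounds $\E_\gamma\sup_{f}\langle\gamma, f(a)-f(b)\rangle/\norm{a-b}$ for \emph{one fixed pair} $(a,b)$; your scale-$k$ term is $\E_\gamma\sup_{f,\,y}$ over all net pairs simultaneously, so the expectation and the supremum over pairs are in the wrong order. Repairing this requires a concentration argument (Borell--TIS): for fixed $(a,b)$ the variable $\gamma\mapsto\sup_f\langle\gamma,f(a)-f(b)\rangle$ concentrates about a mean $\leq \norm{a-b}\,R(F,Y)$ with Gaussian fluctuations of width $L(F)\norm{a-b}$, and one then union-bounds over the cells at each scale. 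Second, once you do that, the fluctuation contributions accumulate to $L(F)$ times a chaining functional of $Y$; with your uniform dyadic nets $\varepsilon_k=2^{-k}D(Y)$ this functional is a Dudley entropy integral, which is \emph{not} bounded by $G(Y)$ in general (ellipsoids give a logarithmic gap), so the claimed collapse into $C_1L(F)G(Y)$ is false for that choice of nets. The missing ingredient is Talagrand's majorizing measure theorem: one must chain along an admissible sequence of partitions of $Y$ whose $\gamma_2$-functional is bounded by a universal constant times $G(Y)$, and only then do the fluctuation terms sum to $C_1L(F)G(Y)$ while the mean terms telescope to $C_2D(Y)R(F,Y)$. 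This is precisely how the original argument proceeds; also note your separate Sudakov--Fernique peeling of a reference $f_\ast$ is not needed once the chaining is done this way, since the $L(F)G(Y)$ term arises from the fluctuations themselves.

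For context, the paper does not prove this statement: it is imported verbatim from \citet{maurer2016chain}, so the only fair comparison is with Maurer's own proof, which is a generic-chaining argument combining the majorizing measure theorem with Gaussian concentration of the cell-wise suprema, exactly the two tools absent from your sketch. As written, your proposal is an accurate road map with the two hard lemmas left as placeholders rather than a proof.
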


where $G(\cdot)$ denotes the unnormalized Gaussian complexity, $L(F)$ is the Lipschitz constant of the set $F$, $D(Y)$ is the diameter of the set $Y$ and $R(F, Y)$ can be thought of as the Gaussian average of Lipschitz quotients of $F$. More precisely, these quantities are:
\begin{align*}
D(Y) = \sup_{y, y' \in Y} \|y - y'\|
\quad \textrm{and} \quad
R(F, Y) = \sup_{y, y' \in Y, y \neq y'} \mathbb{E}\sup_{f \in F} \frac{\langle \gamma, f(y) - f(y') \rangle}{\|y - y'\|}
\end{align*}

In the rest of this section we will bound each of the above quantities and put them together to obtain a bound on the Gaussian complexity of deep networks. In our results we will be using normalized versions of the Gaussian complexity, which means we will divide both sides of the chain rule by the sample size $m$.

\subsection{Gaussian complexity and Diameter of deep networks}
Our insight into how low rank deep networks can avoid multiplicative factors across layers in their complexity stems from their diameter. We first illustrate this idea by computing a gaussian complexity bound for deep linear networks.
\begin{lemma}[Gaussian complexity of a deep linear network]
Let the class of spectrally bounded deep linear network $\mathcal{F}_L = \{f_\W (\x) = \W_L \W_{L-1} \ldots \W_1 \x) \mid \|\W_i \|_2 \leq B, \W_i \in \R^{h_i \times h_{i-1}}, h_0=d \}$.
The Gaussian complexity of this function class is upper bounded as follow;
\begin{align*}
\hat{\mathcal{G}}_S(\mathcal{F}) \leq R \sqrt{\frac{h_L \times \min_i \rank{\W_i}}{m}} \prod_{i=1}^L ||\W_i||_2
\end{align*}
where $h_i$ denotes the width of hidden layer $i$,  $\X = [\x_1, \ldots, \x_m]^\top$, and $\max_i \|\x_i\| \leq R$.
\end{lemma}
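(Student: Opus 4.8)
The plan is to isolate the minimum-rank (``bottleneck'') layer and invoke Maurer's chain rule (Theorem~\ref{thm:maurer:chain:rule}) a single time, splitting the network there. Fix an index $k$ attaining $r := \min_i \mathrm{rank}(\W_i) = \mathrm{rank}(\W_k)$, and let $\W_k = AB$ be a thin factorization with $A \in \R^{h_k \times r}$ having orthonormal columns and $B \in \R^{r \times h_{k-1}}$ with $\|B\|_2 = \|\W_k\|_2$. I would then write $f_\W = g \circ h$ with inner map $h(\x) = B\W_{k-1}\cdots\W_1\x \in \R^{r}$ and outer map $g(\z) = \W_L\cdots\W_{k+1}A\z \in \R^{h_L}$; the key feature is that $h$ is $\R^r$-valued. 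Setting $Y = \{(h(\x_1),\dots,h(\x_m)) : \W_1,\dots,\W_k\} \subseteq \R^{rm}$ and letting $F$ be the blockwise extension of the $g$-class to maps $\R^{rm}\to\R^{h_Lm}$, we have $\mathcal{F}_L(S) = F(Y)$, so the chain rule gives $G(\mathcal{F}_L(S)) \le C_1 L(F) G(Y) + C_2 D(Y) R(F,Y) + G(F(\mathbf{y}_0))$, and I would divide through by $m$ at the end. (One first restricts to finite subclasses and passes to the limit, all quantities below being continuous in the weights.)

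Next I would bound the four terms. Choosing $\mathbf{y}_0 = \mathbf{0} \in Y$ (realized by $\W_1 = 0$), linearity of $g$ forces $F(\mathbf{y}_0) = \{\mathbf{0}\}$, so $G(F(\mathbf{y}_0)) = 0$. For the first term, $L(F) = \sup_g \|\W_L\cdots\W_{k+1}A\|_2 \le \prod_{i=k+1}^L \|\W_i\|_2$ since $\|A\|_2 = 1$, while writing $\langle \gamma, y\rangle = \langle M,\Gamma\rangle$ with $M = B\W_{k-1}\cdots\W_1 \in \R^{r\times d}$ and $\Gamma = \sum_i \gamma_i\x_i^\top \in \R^{r\times d}$, Hölder for Schatten norms with $\mathrm{rank}(\Gamma) \le r$ gives $\langle M,\Gamma\rangle \le \|M\|_2\|\Gamma\|_* \le \big(\prod_{i\le k}\|\W_i\|_2\big)\sqrt{r}\,\|\Gamma\|_F$; then Jensen and $\E\|\Gamma\|_F^2 = r\sum_i\|\x_i\|^2 \le rmR^2$ give $G(Y) \le \big(\prod_{i\le k}\|\W_i\|_2\big)\,r\sqrt{m}\,R$. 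For the diameter, $\|h(\x_i)\| \le \big(\prod_{i\le k}\|\W_i\|_2\big)R$ gives $D(Y) \le 2\sqrt{m}\,\big(\prod_{i\le k}\|\W_i\|_2\big)R$. The delicate term is $R(F,Y)$: for $y,y' \in Y$ with $\delta := y-y' = (\delta_i)_i$, $\delta_i \in \R^r$, I would write $\langle\gamma, g(y) - g(y')\rangle = \langle \Delta, G_0\rangle$ with $G_0 = \W_L\cdots\W_{k+1}A \in \R^{h_L\times r}$ and $\Delta = \sum_i\gamma_i\delta_i^\top \in \R^{h_L\times r}$, use $\mathrm{rank}(G_0) \le \min(h_L,r)$ to get $\langle\Delta,G_0\rangle \le \big(\prod_{i>k}\|\W_i\|_2\big)\sqrt{\min(h_L,r)}\,\|\Delta\|_F$, and bound $\E\|\Delta\|_F \le \sqrt{\E\|\Delta\|_F^2} = \sqrt{h_L}\,\|\delta\|$; together these give $R(F,Y) \le \big(\prod_{i>k}\|\W_i\|_2\big)\sqrt{h_L\min(h_L,r)}$.

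Assembling the chain rule and dividing by $m$ would then yield $\hat{\mathcal{G}}_S(\mathcal{F}_L) \le \big(C_1 r + 2C_2\sqrt{h_L\min(h_L,r)}\big)\,\frac{R}{\sqrt{m}}\,\prod_{i=1}^L\|\W_i\|_2$; in the regime $r \le h_L$ (as in classification, where $h_L = C$) both summands are $O\!\big(R\sqrt{h_L r/m}\,\prod_i\|\W_i\|_2\big)$, which is the claimed bound up to the universal constants $C_1,C_2$. I expect the main obstacle to be getting the width factor right: a naive split keeping the intermediate representation in $\R^{h_k}$ would replace $h_L$ by the (typically much larger) bottleneck width $h_k$ in $G(Y)$ and would force a clumsier rank argument on the difference set $\{h(\x_i)-h'(\x_i)\}$ inside $R(F,Y)$; routing the signal through the thin factorization $\W_k = AB$ is exactly what keeps every dimension factor at $\min(h_L,r)$ and prevents the rank and width factors of the inner and outer blocks from multiplying.
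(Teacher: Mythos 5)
Your argument is essentially correct, but it takes a much heavier route than the paper. For a deep \emph{linear} network the paper simply collapses the product into a single matrix $\W = \W_L \W_{L-1}\cdots\W_1$ and applies the one-layer Gaussian complexity bound $\hat{\mathcal{G}}_S \leq R\|\W\|_F\sqrt{h_L/m}$ (Lemma~\ref{lm:one_layer}), together with $\|\W\|_F \leq \sqrt{\rank{\W}}\,\|\W\|_2$, $\rank{\W}\leq \min_i\rank{\W_i}$ and $\|\W\|_2\leq\prod_i\|\W_i\|_2$; no chain rule is needed, and the stated bound is obtained with constant $1$. Your proof instead splits the network at the bottleneck layer via the thin factorization $\W_k = AB$ and invokes Maurer's chain rule once; the individual estimates of $L(F)$, $G(Y)$, $D(Y)$, $R(F,Y)$ and the vanishing of $G(F(\mathbf{y}_0))$ all check out, and since $r=\min_i\rank{\W_i}\leq\rank{\W_L}\leq h_L$ always holds (so your caveat about the regime $r\leq h_L$ is unnecessary and $\min(h_L,r)=r$), your final expression $\bigl(C_1 r + 2C_2\sqrt{h_L r}\bigr)\frac{R}{\sqrt{m}}\prod_i\|\W_i\|_2$ is indeed $O\bigl(R\sqrt{h_L r/m}\prod_i\|\W_i\|_2\bigr)$. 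What your route buys is a dry run of the machinery actually needed for the nonlinear case (Theorem~\ref{thm:gaussian:complexity}), and the thin-factorization trick nicely isolates why only the bottleneck rank matters; what it costs is the unspecified universal constants $C_1,C_2$ (so, strictly, you prove the lemma only up to constants rather than with the stated constant $1$), the extra additive $C_1 r$ term, and the finite-class/limiting technicalities of the chain rule, none of which are needed for the purely linear statement.
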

\begin{proof}
For single layer linear functions of width $h$, we can bound their Gaussian complexity as $R\|\W\|_F \sqrt{\frac{h}{m}}$ (see Lemma \ref{lm:one_layer} in the appendix). We know that for rank-$r$ matrices $\|\W\|_F \leq \|\W\|_2 \times \sqrt{r}$. For deep linear networks, we can plug in $\W = \prod_{i=1}^L \W_i$ into the above bound. The bound follows from the fact that $\rank{\W} \leq \min_i \rank{\W_i}$ and $\|\W\|_2 \leq \prod_{i=1}^L \|\W_i\|_2$.
\end{proof}
This observation that the complexity is controlled by the smallest rank matrix, and not the product of the norms of the entire chain of matrices is favorable for avoiding exponential dependencies on the depth for the rank. We now exploit this fact to bound the diameter of deep nonlinear networks. This allows us to obtain favorable bounds on the intermediate diameter $D(\scriptF_i ( \X) )$ for the map from the input to the $i$th layer.

\begin{lemma}[Diameter of the deep nonlinear network function class] \label{lm:diameter}
Let the class of spectral norm bounded deep networks with $\ell$ layers be $\mathcal{F}_\ell = \{f_\W (\x) = \phi( \W_\ell \phi (\W_{\ell-1}\phi( \ldots \phi(\W_1 \x) \ldots)) \mid \|\W_i \|_2 \leq B, \W_i \in \R^{h_i \times h_{i-1}}, h_0=d \}$. Where $\phi$ is a Lipschitz continuous ($\mathcal{L}(\phi) \leq 1$), piecewise linear activation function (ReLU or Leaky ReLU).
If $\X \in \R^{d \times m}$ is a given sample of data, the diameter of the function class (projected onto the data sample) can be bounded as:
\begin{align*}
D(\scriptF_\ell(\X)) \leq \|\X\|_F \sqrt{2 \min_{1\leq j \leq \ell} \rank{\W_j}} \times 2 \prod_{j=1}^\ell \|\W_j\|_2 
\end{align*}
\end{lemma}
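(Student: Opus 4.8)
The plan is to bound the diameter $D(\scriptF_\ell(\X)) = \sup_{f, f' \in \scriptF_\ell} \|f(\X) - f'(\X)\|$ by a single-path argument that tracks how perturbations propagate through the network, and then convert spectral-norm/rank control into the stated Frobenius-norm-like quantity. First I would observe that for any two networks $f_\W, f_{\W'} \in \scriptF_\ell$ with the same depth and width pattern, we have by the triangle inequality $\|f_\W(\X) - f_{\W'}(\X)\| \le \|f_\W(\X)\| + \|f_{\W'}(\X)\|$, so it suffices to bound $\sup_{f \in \scriptF_\ell} \|f(\X)\|$ and multiply by $2$ — this is where the outer factor of $2$ comes from. (One could instead try a telescoping bound swapping one layer at a time, but since all layers share the same spectral/rank budget, the cruder triangle-inequality route already gives the right form and is cleaner.)

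Next I would bound $\|f_\W(\X)\|_F$ for a single network. Write $\X^{(0)} = \X$ and $\X^{(j)} = \phi(\W_j \X^{(j-1)})$ columnwise, so $f_\W(\X) = \X^{(\ell)}$. Since $\phi$ is $1$-Lipschitz with $\phi(0)=0$ (ReLU / Leaky ReLU), applying it coordinatewise is $1$-Lipschitz in Frobenius norm and does not increase the norm of a matrix whose would-be ``origin'' is $0$; hence $\|\X^{(j)}\|_F = \|\phi(\W_j \X^{(j-1)})\|_F \le \|\W_j \X^{(j-1)}\|_F \le \|\W_j\|_2 \|\X^{(j-1)}\|_F$. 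Iterating gives $\|f_\W(\X)\|_F \le \|\X\|_F \prod_{j=1}^\ell \|\W_j\|_2$. That already yields a bound $D(\scriptF_\ell(\X)) \le 2\|\X\|_F \prod_j \|\W_j\|_2$, but without the $\sqrt{2 \min_j \rank{\W_j}}$ factor — so the real content is to \emph{gain} a dimension-like factor that is governed by the smallest rank in the chain, exactly as in the deep-linear lemma above.

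To get the $\sqrt{\min_j \rank{\W_j}}$ factor I would argue as follows. Fix the layer $j^\star$ achieving $r := \min_j \rank{\W_j}$. The image of $\W_{j^\star}$ lies in an $r$-dimensional subspace, so $\X^{(j^\star)} = \phi(\W_{j^\star}\X^{(j^\star-1)})$ has all its columns inside the $\phi$-image of that $r$-dimensional subspace. The key point: if we insert the rank-$r$ layer and then apply the $1$-Lipschitz $\phi$, the columns of $\W_{j^\star}\X^{(j^\star-1)}$ live in a rank-$r$ subspace of dimension at most... — more carefully, I would propagate the bound up to layer $j^\star-1$ using $\|\X^{(j^\star-1)}\|_F \le \|\X\|_F \prod_{i<j^\star}\|\W_i\|_2$, then use that $\W_{j^\star}$ has rank $r$ to write $\W_{j^\star} = U_{j^\star} V_{j^\star}^\top$ with $U_{j^\star} \in \R^{h_{j^\star}\times r}$, and bound $\|\W_{j^\star}\X^{(j^\star-1)}\|$ in a way that exposes $\sqrt r$: one clean route is the inequality $\|M\|_F \le \sqrt{\rank{M}}\,\|M\|_2$ applied to $M = \W_{j^\star}$, i.e. replacing the spectral-to-spectral step at layer $j^\star$ by $\|\W_{j^\star}\X^{(j^\star-1)}\|_F \le \|\W_{j^\star}\|_F \|\X^{(j^\star-1)}\|_2 \le \sqrt r\,\|\W_{j^\star}\|_2\,\|\X^{(j^\star-1)}\|_2$ and then bounding $\|\X^{(j^\star-1)}\|_2 \le \|\X^{(j^\star-1)}\|_F$. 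Combining the propagation below $j^\star$, this one rank-sensitive step, and the $1$-Lipschitz propagation above $j^\star$ (each contributing its $\|\W_i\|_2$), then multiplying by $2$ for the diameter, gives $D(\scriptF_\ell(\X)) \le 2\sqrt r\,\|\X\|_F\prod_{j=1}^\ell\|\W_j\|_2$, and absorbing a harmless $\sqrt 2$ gives the stated form $\|\X\|_F\sqrt{2\min_j\rank{\W_j}}\cdot 2\prod_j\|\W_j\|_2$.

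The step I expect to be the main obstacle is making the ``rank bottleneck'' argument fully rigorous once the nonlinearity $\phi$ sits between the rank-$r$ layer and the output: one has to be careful that $\phi$ applied coordinatewise does not help us (it can increase the numerical rank back up), so the $\sqrt r$ must be extracted \emph{at} the rank-$r$ layer via the $\|\cdot\|_F \le \sqrt{\rank{\cdot}}\|\cdot\|_2$ trick, and one must verify that commuting between $\|\cdot\|_F$ and $\|\cdot\|_2$ at the junction (using $\|\X^{(j^\star-1)}\|_2 \le \|\X^{(j^\star-1)}\|_F$ and $\|\W_{j^\star}\X^{(j^\star-1)}\|_F \le \|\W_{j^\star}\|_F\|\X^{(j^\star-1)}\|_2$) only costs constants, not extra dimension factors. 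The remaining steps (Lipschitz propagation, triangle inequality for the diameter) are routine.
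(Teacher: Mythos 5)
Your proposal is correct, but it takes a genuinely different route from the paper. The paper exploits the piecewise linearity of $\phi$ to write each network output as a data-dependent linear map, $f(\x_i) = \A^i \x_i$ with $\A^i = \W_\ell \mathbf{D}^i_{\ell-1}\cdots\mathbf{D}^i_1\W_1$ for diagonal activation-pattern matrices $\mathbf{D}^i_j$, and then bounds $D(\scriptF_\ell(\X)) \leq \max_i \frob{\A^i - \A^{'i}}\,\frob{\X}$ using that each $\A^i$ has rank at most $\min_j \rank{\W_j}$ and spectral norm at most $\prod_j\|\W_j\|_2$, so the difference has rank at most $2\min_j\rank{\W_j}$ and spectral norm at most $2\prod_j\|\W_j\|_2$ — this is where the paper's $\sqrt{2r}\times 2B$ comes from. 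You instead use the cruder step $D(\scriptF_\ell(\X)) \leq 2\sup_f \frob{f(\X)}$ and extract the rank factor at the single minimal-rank layer $j^\star$ via $\frob{\W_{j^\star}\X^{(j^\star-1)}} \leq \frob{\W_{j^\star}}\|\X^{(j^\star-1)}\|_2 \leq \sqrt{r}\,\|\W_{j^\star}\|_2\frob{\X^{(j^\star-1)}}$, with $1$-Lipschitz, $\phi(0)=0$ propagation everywhere else. Both arguments are valid and give the stated bound; in fact yours gives the marginally tighter constant $2\sqrt{r}$ in place of $2\sqrt{2r}$, and it needs only that $\phi$ is $1$-Lipschitz with $\phi(0)=0$, not piecewise linearity, so it is slightly more general. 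What the paper's linearization buys is a bound phrased genuinely in terms of the diameter of the set of effective linear operators rather than a norm bound centered at the zero function, which is the structure that interfaces naturally with the rank-bottleneck discussion elsewhere in the paper; your worry about $\phi$ restoring rank above layer $j^\star$ is moot in your own argument, since you never use rank after that layer, only norms.
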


\begin{proof}
The diameter of the image of a data sample $\X$ through a function class $\scriptF_\ell$ is defined as $D(\scriptF_\ell(\X)) = \sup_{f,f' \in \scriptF_\ell} ||f(\X) - f'(\X) ||_F$. 
\begin{align*}
D(\scriptF_\ell(\X)) 
&=  \sup_{j \leq \ell : \W_j, \W_j'} || \phi(\W_\ell \phi( \ldots \phi(\W_1 \X) \ldots )) - \phi(\W_\ell' \phi( \ldots \phi(\W_1' \X) \ldots )) ||_F \\
&\leq  \sup_{j \leq \ell : \W_j, \W_j'} || \W_\ell \phi( \ldots \phi(\W_1 \X) \ldots ) - \W_\ell' \phi( \ldots \phi(\W_1' \X) \ldots ) ||_F \\
&= \sup_{j \leq \ell : \W_j, \W_j'} \sqrt{\sum_{i=1}^m || \W_\ell \phi( \ldots \phi(\W_1 \x_i) \ldots ) - \W_\ell' \phi( \ldots \phi(\W_1' \x_i) \ldots ) ||_2^2} \\
&= \sup_{j \leq \ell : \W_j, \W_j'} \sqrt{\sum_{i=1}^m || \W_\ell \mathbf{D}^i_{\ell-1} \ldots \mathbf{D}^i_1 \W_1 \x_i - \W_\ell' \mathbf{D}^{'i}_{\ell-1} \ldots \mathbf{D}^{'i}_1 \W_1' \x_i ||_2^2} \\
&\leq \sup_{j \leq \ell : \W_j, \W_j'} \sqrt{ \sum_{i=1}^m || \underbrace{\W_\ell \mathbf{D}^i_{\ell-1} \ldots \mathbf{D}^i_1 \W_1}_{\A^i} - \underbrace{\W_\ell' \mathbf{D}^{'i}_{\ell-1} \ldots \mathbf{D}^{'i}_1 \W_1'}_{\A^{'i}} ||^2_F \|\x_i\|^2} \\
&\leq \sup_{j \leq \ell : \W_j, \W_j'} \max_{1 \leq i \leq m} \|\A^i - \A^{'i}\|_F \|\X\|_F
\end{align*}

Here $\{\mathbf{D}^i_j, \mathbf{D}^{'i}_j\}_{j=1}^\ell$ are diagonal matrices that correspond to the patterns of the piecewise linear $1$-Lipschitz activation function $\phi$ for the two functions $f,f'$ evaluated on input $\x_i$. 

We will now proceed to bound $\max_{1\leq i\leq m} \|\A^i - \A^{'i}\|_F$ through the rank and spectral norm of $\A^i, \A^{'i}$. For a class of rank-constrained, spectral-norm bounded matrices $\{\mathbf{P} |\rank{\mathbf{P}}\leq r, \|\mathbf{P}\|_2 \leq B \}$, we can bound its Frobenius norm diameter as $\|\mathbf{P}-\mathbf{Q}\|_F \leq \sqrt{2r} \times 2B$.

Since the matrices $\mathbf{D}^i_j$ may be full rank, we have that $\forall i$, $\rank{\A^i} = \rank{\A^{'i}} \leq \min_j \rank{\W_j}$. Since the entries of $\mathbf{D}^i_j$ are bounded by $1$ ($\mathcal{L}(\phi) \leq 1$), we also have that $\forall i$, $\|\A^i\|_2 = \|\A^{'i}\|_2 \leq \prod_{j=1}^\ell ||\W_j||_2$. 

Putting the above arguments together, we get the required bound on the diameter\\ $D(\scriptF_\ell(\X)) \leq \|\X\|_F \sqrt{2 \min_j \rank{\W_j}} \times 2 \prod_{j=1}^\ell \|\W_j\|_2$.

\end{proof}

\subsection{Gaussian average of Lipschitz coefficients}
We now turn our attention to the next term in the chain rule -- the Gaussian average of Lipschitz coefficients $R(F,Y)$. 
In our case, we take $F$ to be a linear transformation $\W_\ell$, and $Y$ to be the output of the network until the previous layer $\phi(\W_{\ell-1} \ldots \phi (\W_1 \x)\ldots)$. We can state the following:

\begin{lemma}[Gaussian average of Lipschitz coefficients]\label{lm:R}
Let $\scriptF = \{\W_\ell \x \mid \|\W_\ell \|_2 \leq B, \W_\ell \in \R^{h_\ell \times h_{\ell-1}} \}$ be a layer of a deep network bounded in spectral norm and $\mathcal{F}_{\ell-1} = \{f_\W (\x) = \phi (\W_{\ell-1}\phi( \ldots \phi(\W_1 \x) \ldots)) \mid \|\W_i \|_2 \leq B, \W_i \in \R^{h_i \times h_{i-1}}, h_0=d \}$ be the set of functions from the input to the previous layer. Let $\X = [\x_1, \ldots, \x_m]^\top$ be a data sample. Then $R(\cdot, \cdot)$ can be expressed as;

\begin{align*}
R(\scriptF, \scriptF_{\ell-1}) \leq ||\W_\ell||_F \sqrt{h_\ell}
\end{align*}
\end{lemma}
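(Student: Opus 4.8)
The plan is to unwind the definition of $R(\scriptF, \scriptF_{\ell-1})$ and use the linearity of the functions in $\scriptF$ to reduce everything to a single Gaussian-norm computation. Recall that
\[
R(\scriptF, \scriptF_{\ell-1}) = \sup_{y, y' \in \scriptF_{\ell-1}(\X),\, y \neq y'} \mathbb{E}_\gamma \sup_{\W_\ell : \|\W_\ell\|_2 \leq B} \frac{\langle \gamma, \W_\ell y - \W_\ell y' \rangle}{\|y - y'\|},
\]
where $y, y'$ range over images of the data sample through the depth-$(\ell-1)$ network (so $y, y'$ are matrices in $\R^{h_{\ell-1} \times m}$, stacked appropriately, and the inner product and norm are the Frobenius ones on the relevant vectorizations). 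First I would exploit linearity: $\W_\ell y - \W_\ell y' = \W_\ell (y - y')$, so setting $z = (y - y')/\|y - y'\|$, the quotient becomes $\langle \gamma, \W_\ell z\rangle$ with $\|z\| = 1$. Thus
\[
R(\scriptF, \scriptF_{\ell-1}) \leq \sup_{\|z\| = 1} \mathbb{E}_\gamma \sup_{\|\W_\ell\|_2 \leq B} \langle \gamma, \W_\ell z \rangle,
\]
and the sup over $z$ on the unit sphere is attained (or approached) regardless of which particular $z$ arises, so the constraint that $z$ come from the difference set only helps.

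Next I would compute the inner quantity. For fixed Gaussian $\gamma$ (reshaped as a matrix $\Gamma$ of the same shape as $\W_\ell z$, i.e.\ $h_\ell \times m$), and fixed unit $z$, we have $\langle \Gamma, \W_\ell z\rangle = \trace{\Gamma z^\top \W_\ell^\top} = \langle \W_\ell, \Gamma z^\top\rangle_F$, so $\sup_{\|\W_\ell\|_2 \leq B} \langle \Gamma, \W_\ell z\rangle = B \|\Gamma z^\top\|_*$ (nuclear norm, dual to spectral norm). Since $z^\top$ is a single column-vector times reshaping, $\Gamma z^\top$ has rank at most... actually here one should be slightly careful about how the $m$ data points are organized; but in the per-example decomposition the relevant object is $\W_\ell$ acting on an $h_{\ell-1}$-vector and producing an $h_\ell$-vector, and summing the resulting Gaussian inner products over the $m$ examples. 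Organizing it as a single linear map, $\sup_{\|\W_\ell\|_2 \le B}\langle \gamma, \W_\ell z\rangle$ is bounded by $B$ times a nuclear norm of a rank-$\le h_\ell$ matrix built from $\gamma$ and $z$, and the nuclear norm is at most $\sqrt{h_\ell}$ times its Frobenius norm. Taking expectations and using $\mathbb{E}\|\gamma\| \le \sqrt{\mathbb{E}\|\gamma\|^2}$ together with $\|z\| = 1$ gives the $\sqrt{h_\ell}$ factor. The appearance of $\|\W_\ell\|_F$ rather than $B$ in the stated bound suggests the lemma is actually invoked with the spectral bound $B$ specialized so that $B \le \|\W_\ell\|_F$ for the actual weight matrix of interest (or, equivalently, the class is re-indexed by the Frobenius norm of the specific layer); I would state the bound as $R(\scriptF,\scriptF_{\ell-1}) \le B\sqrt{h_\ell}$ and then note $B$ can be replaced by $\|\W_\ell\|_F$ in the downstream application.

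The main obstacle I anticipate is bookkeeping the shapes correctly — specifically, making sure that when the $m$ sample points are packed together, the object whose nuclear norm we take genuinely has rank bounded by $\min(h_\ell, \text{something})$ so that the rank-to-Frobenius conversion $\|\cdot\|_* \le \sqrt{\rank{\cdot}}\,\|\cdot\|_F$ yields exactly $\sqrt{h_\ell}$ and not $\sqrt{h_\ell \wedge m}$ or a product across examples. A clean way around this is to note that the map $z \mapsto \W_\ell z$ acts blockwise (the same $\W_\ell$ on each of the $m$ columns), so the dual-norm computation factors through the $h_\ell \times h_{\ell-1}$ matrix $\W_\ell$ itself: $\sup_{\|\W_\ell\|_2\le B}\langle \gamma, \W_\ell z\rangle = B\|M\|_*$ where $M = \sum_i \gamma_i z_i^\top \in \R^{h_\ell \times h_{\ell-1}}$ has rank $\le h_\ell$, and $\mathbb{E}\|M\|_* \le \sqrt{h_\ell}\,\mathbb{E}\|M\|_F \le \sqrt{h_\ell}\,\sqrt{\mathbb{E}\|M\|_F^2} = \sqrt{h_\ell}\,\|z\|_F = \sqrt{h_\ell}$. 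Then the $B$ (or $\|\W_\ell\|_F$) and $\sqrt{h_\ell}$ factors fall out directly, and the only remaining check is that the normalization by $\|y-y'\|$ is exactly what turns the difference $y-y'$ into the unit vector $z$.
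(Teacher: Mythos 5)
There is a genuine gap, and it sits exactly where you flagged your worry. Your reduction to $\sup_{\norm{z}_F=1}\E_\gamma\sup_{\norm{\W_\ell}_2\le B}\langle \Gamma,\W_\ell z\rangle$ is fine and matches the paper's setup, but the route through the exact dual norm, $\sup_{\norm{\W_\ell}_2\le B}\langle \W_\ell,\Gamma z^\top\rangle = B\norm{\Gamma z^\top}_*$, does not give the stated bound. Your final chain contains an arithmetic slip: with $M=\Gamma z^\top$ one has $\E\frob{M}^2=\trace{z\,\E[\Gamma^\top\Gamma]\,z^\top}=h_\ell\frob{z}^2$, so $\sqrt{\E\frob{M}^2}=\sqrt{h_\ell}\,\frob{z}$, not $\frob{z}$. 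Correcting this, your argument yields $\E\norm{M}_*\le\sqrt{\rank{M}}\sqrt{h_\ell}\,\frob{z}$, i.e.\ $R\le B\sqrt{h_\ell\,\min(h_\ell,h_{\ell-1},m)}$, which is not the lemma's $\norm{\W_\ell}_F\sqrt{h_\ell}$ and defeats the purpose when $\rank{\W_\ell}$ is small. Moreover the intermediate claim $R\le B\sqrt{h_\ell}$ for the spectral ball is simply false in general: take $z\in\R^{h_{\ell-1}\times m}$ with $zz^\top=\tfrac{1}{h_{\ell-1}}\mathbf{I}$; then $\Gamma z^\top$ has i.i.d.\ $\mathcal{N}(0,1/h_{\ell-1})$ entries and its expected nuclear norm is of order $\min(h_\ell,h_{\ell-1})\sqrt{\max(h_\ell,h_{\ell-1})/h_{\ell-1}}$ (e.g.\ order $h$ when $h_\ell=h_{\ell-1}=h\le m$), not $\sqrt{h_\ell}$. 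Because the supremum over the spectral ball is attained by full-rank matrices, you cannot afterwards ``replace $B$ by $\norm{\W_\ell}_F$'': the order of quantifiers matters, and a bound over the whole spectral ball cannot be converted back into a bound tied to the Frobenius norm (equivalently, rank) of the particular layer.

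The paper's proof avoids this precisely by not passing to the dual norm: it applies Cauchy--Schwarz at the matrix level,
\begin{align*}
\langle \W_\ell,\Gamma(\Z_{\ell-1}-\Z_{\ell-1}')\rangle \;\le\; \frob{\W_\ell}\,\frob{\Gamma(\Z_{\ell-1}-\Z_{\ell-1}')},
\qquad
\E_\gamma\frob{\Gamma(\Z_{\ell-1}-\Z_{\ell-1}')}\;\le\;\sqrt{h_\ell}\,\frob{\Z_{\ell-1}-\Z_{\ell-1}'},
\end{align*}
which directly gives $\frob{\W_\ell}\sqrt{h_\ell}$ after dividing by $\frob{\Z_{\ell-1}-\Z_{\ell-1}'}$. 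In other words, the lemma should be read (as you half-suspected) as a statement indexed by the Frobenius norm of the actual layer --- effectively a Frobenius-ball class, with the rank of $\W_\ell$ entering later via $\frob{\W_\ell}\le\sqrt{\rank{\W_\ell}}\norm{\W_\ell}_2$ --- and the crude Cauchy--Schwarz step is what keeps the dimension factor at $\sqrt{h_\ell}$. If you replace your nuclear-norm computation with this Cauchy--Schwarz step, the rest of your reduction goes through and coincides with the paper's proof.
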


\begin{proof}
Let us consider the projection of the function class $\scriptF_{\ell-1}$ onto the data sample $\X$. That is, let $\Z_{\ell-1} = f(\X)$ and $\Z_{\ell-1}' = f'(\X)$ for $f,f' \in \scriptF_{ell-1}$. From the definition of $R(\cdot, \cdot)$ we have that

\begin{align*}
R(\scriptF, \scriptF_{\ell-1})
&= \sup_{\Z_{\ell-1}, \Z_{\ell-1}'} \mathbb{E}_\gamma \left[ \sup_{\|\W_\ell\|_2 \leq B} \frac{ \langle \W_\ell, \Gamma ( \Z_{\ell-1} - \Z_{\ell-1}') \rangle}{|| \Z_{\ell-1} - \Z_{\ell-1}'||_F} \right] \\
&\leq \sup_{\Z_{\ell-1}, \Z_{\ell-1}'} \frac{||\W_\ell||_F}{|| \Z_{\ell-1} - \Z_{\ell-1}'||_F} \mathbb{E}_\gamma || \Gamma (\Z_{\ell-1} - \Z_{\ell-1}') ||_F
\end{align*}
In the above, $\Gamma \in \R^{h_\ell \times m}$ is a matrix of independent Gaussian variables. 

We have that $\E_\gamma \left[ \frob{\Gamma (\Z_{\ell-1} - \Z_{\ell-1}')} \right] \leq \sqrt{h_\ell} \frob{\Z_{\ell-1} - \Z_{\ell-1}'}$. Plugging this into the above inequalities we get the desired result.
\end{proof}

\subsection{Main Result}

We have introduced the chain rule that we employ, as well as our main insight into how low rank deep networks avoid multiplying factors across layers. We are now ready to state and prove our main theorem that bounds the Gaussian complexity of low rank deep networks.

\begin{theorem}[Gaussian complexity of deep Lipschitz neural network]
\label{thm:gaussian:complexity}
Let $\scriptF_L$ be the class of rank and spectral norm constrained deep neural networks of depth $L$ with a piecewise linear $1$-Lipschitz activation function $\phi$ (such as ReLU, Leaky ReLU).
\[ \mathcal{F}_L := \left\{ f_\W (\x) = \W_L \phi (\W_{L-1} \phi( \ldots \phi (\W_1 \x)\ldots )) \mid \|\W_i\|_2 \leq B_i, \textrm{rank}(\W_i) \leq r_i  \right\} \]
The width of layer $i$ is $h_i$, which means the dimensions of $\W_i$ are $h_i \times h_{i-1}$ with $h_0 = d$.
Let $\X = [\x_1 \ldots \x_m]^\top \in \R^{m \times d}$ denote the matrix corresponding to the data sample $S$. We assume that the data are all bounded, with $\max_i \|\x_i\| \leq R$.
The Gaussian complexity of $\scriptF_L$ can be upper bounded as:
\begin{align}
\hat{\scriptG}_S(\scriptF_L) \lesssim \frac{||\X||_F}{m} \left\{ \left( ||\W_1||_F  \sqrt{h_1} \right) \prod_{i=2}^L C_1 \|\W_i\|_2 + \sum_{i=2}^L C_1^{L-i} C_2 2\sqrt{2 \mathfrak{r}_i} \kappa_i \|\W_i\|_F \sqrt{h_i} \right\}\label{eq:gauss:bound}
\end{align}
where $\kappa_i \propto \left(\prod_{j=1, j \neq i}^d \|\W_j\|_2\right)$ and $\mathfrak{r}_i = \min_{j\leq i} (\rank{\W_j})$.
\end{theorem}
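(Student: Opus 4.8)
The plan is to apply Maurer's chain rule (Theorem \ref{thm:maurer:chain:rule}) recursively, peeling one layer at a time from the top of the network, and at each step substituting in the diameter bound from Lemma \ref{lm:diameter} and the Lipschitz-quotient bound from Lemma \ref{lm:R}. Write $\scriptF_i(\X)$ for the image of the depth-$i$ sub-network (with the activation applied at the top for $i<L$, and $\W_L$ linear at the top), and let $G_i := G(\scriptF_i(\X))$ be its unnormalized Gaussian complexity on the sample. Applying the chain rule with $F = \{\W_i \,\cdot\, : \|\W_i\|_2 \le B_i, \rank(\W_i)\le r_i\}$, $Y = \scriptF_{i-1}(\X)$, and base point $\y_0 = f_0(\X)$ for some fixed $f_0 \in \scriptF_{i-1}$ gives a recursion of the form
\begin{equation*}
G_i \leq C_1 L(F_i)\, G_{i-1} + C_2 D(\scriptF_{i-1}(\X))\, R(F_i, \scriptF_{i-1}(\X)) + G(F_i(\y_0)).
\end{equation*}

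Next I would bound each ingredient. The Lipschitz constant of the linear layer $F_i$ is $L(F_i) = \|\W_i\|_2 \le B_i$ (and $L(\phi)\le 1$ absorbs any intervening activation). For $R(F_i, \scriptF_{i-1}(\X))$, Lemma \ref{lm:R} gives $R \le \|\W_i\|_F \sqrt{h_i}$, and this is where the key point appears: the $\sqrt{h_i}$ and the $\|\W_i\|_F$ (which is at most $\|\W_i\|_2\sqrt{r_i}$) are attached to a \emph{single} layer rather than multiplied down the chain. For $D(\scriptF_{i-1}(\X))$, Lemma \ref{lm:diameter} gives $D(\scriptF_{i-1}(\X)) \le 2\|\X\|_F \sqrt{2\,\mathfrak{r}_{i-1}} \prod_{j=1}^{i-1}\|\W_j\|_2$ — note this is controlled by $\mathfrak{r}_{i-1} = \min_{j\le i-1}\rank(\W_j)$, not a product of ranks, which is exactly the mechanism that prevents rank factors from accumulating. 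The base term $G(F_i(\y_0)) = \E_\gamma \sup_{\W_i}\langle \Gamma, \W_i f_0(\X)\rangle$ is a one-layer Gaussian average and is of lower order (it can be absorbed into the $R\cdot D$ term or handled as the $\|\W_1\|_F\sqrt{h_1}$ seed), so one can either carry it explicitly or, since $\y_0$ is free, choose it to make this term small.

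Then I would unroll the recursion. Starting from the innermost layer, $G_1 \lesssim \|\X\|_F \|\W_1\|_F \sqrt{h_1}$ by the single-layer bound (Lemma \ref{lm:one_layer}). Iterating the recursion from $i=2$ to $L$, the seed $G_1$ gets multiplied by $\prod_{i=2}^L C_1\|\W_i\|_2$, producing the first term in \eqref{eq:gauss:bound}. Each intermediate contribution $C_2 D(\scriptF_{i-1}(\X)) R(F_i,\cdot)$ introduced at level $i$ is propagated upward through levels $i+1,\dots,L$, each of which multiplies it by a factor $C_1\|\W_j\|_2$, giving the $C_1^{L-i}$ factor and the $\kappa_i \propto \prod_{j\neq i}\|\W_j\|_2$ spectral product; combined with $R \le \|\W_i\|_F\sqrt{h_i}$ and $D \lesssim \|\X\|_F\sqrt{2\mathfrak{r}_i}\,(\text{spectral product})$ this yields the $C_1^{L-i} C_2\, 2\sqrt{2\mathfrak{r}_i}\,\kappa_i\|\W_i\|_F\sqrt{h_i}$ summand. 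Dividing through by $m$ for the normalized complexity gives the claimed bound, with the $\lesssim$ hiding the universal constants from the chain rule.

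The main obstacle I anticipate is bookkeeping the recursion cleanly — in particular making sure the spectral-norm products $\kappa_i$ and the powers $C_1^{L-i}$ line up correctly across the two sources of terms (the propagated seed versus the per-layer $D\cdot R$ contributions), and verifying that Lemma \ref{lm:diameter} is being applied to the correct sub-network (the one feeding into layer $i$, including or excluding the top activation consistently). A secondary subtlety is the finiteness hypothesis in Theorem \ref{thm:maurer:chain:rule}: $Y$ and $F$ are required to be finite, whereas $\scriptF_{i-1}(\X)$ and the spectral-rank ball are not, so strictly one needs a covering/discretization argument (or an appeal to a continuity extension of the chain rule) to justify applying it to our classes — I would note this and proceed as is standard. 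Finally one should check the activation handling: since $\phi$ is applied between linear maps, each peeling step actually removes $\W_i \phi$, and the $L(\phi)\le 1$ factor is what lets us treat $L(F_i) = \|\W_i\|_2$ without extra constants.
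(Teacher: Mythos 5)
Your proposal follows essentially the same route as the paper's proof: apply Maurer's chain rule layer by layer with $F$ the spectral-norm-bounded linear map $\W_i$ and $Y = \scriptF_{i-1}(\X)$, drop or absorb the base term $G(F(\y_0))$, bound $L(F_i)=\|\W_i\|_2$, use Lemma \ref{lm:R} for $R$ and Lemma \ref{lm:diameter} for the diameter (whose dependence on $\mathfrak{r}$ rather than a product of ranks is the key mechanism), seed the recursion with the one-layer bound of Lemma \ref{lm:one_layer}, and unroll to obtain the propagated seed term plus the sum of per-layer disturbance terms. This matches the paper's argument (including your correct observation that the finiteness hypothesis of the chain rule and the handling of the intervening $1$-Lipschitz activation require care), so the proposal is correct and not a genuinely different approach.
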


Equation~\eqref{eq:gauss:bound} presents our bound in full generality, without using the explicit bounds on the ranks of the layers. When we further bound the Frobenius norms of each weight matrix in the above Theorem as $||\W_i||_F \leq \sqrt{\rank{\W_i}} ||\W_i||_2 \leq \sqrt{r} B_i$, and let $h = \max_i h_i$ be the maximum width, we can simplify the Gaussian complexity bound to $\mathcal{O} \left( C_1^L \prod_{i=1}^L B_i R L r \sqrt{\frac{h}{m}}  \right)$.

\subsection{Detailed proof of Theorem~\ref{thm:gaussian:complexity}}
Recalling Maurers chain rule on Gaussian processes, for any composition $F \circ Y$ we can write;
\begin{align*}
\frac{1}{m} G(F\circ Y) \leq \frac{C_1}{m} L(F) G(Y) + \frac{C_2}{m} D(Y) R(F,Y) + \frac{1}{m} G(F(\y_0))
\end{align*}
where $D(\cdot)$ is the diameter of the function class and $R(\cdot,\cdot)$ is the average of Lipschitz coefficients. We can omit the third term in the chain rule since we can choose $\y_0$ to be the zero function (which can be obtained by setting the weight matrices to zero). WLOG we can drop the last term in the chain rule. Let us define the image of the data up to layer $i$ as $\Z_i = f(\X)$ for $f \in \scriptF_i$ (similar to Lemma \ref{lm:R}).

\paragraph{The base recursion of the chain.} Given above notation, we observe $\hat{\mathcal{G}}_S(\Z_i) \leq \mathcal{L}(\phi) \hat{\mathcal{G}}_S((\scriptF_{i-1})$. This is a direct application of Talagrand's contraction lemma. We can use this simple observation to write the recursion;
\begin{align*}
\hat{\mathcal{G}}_S(\mathcal{F}_i) &= C_1^{(i)} \mathcal{L}(\phi) || \W_i||_2 \hat{\mathcal{G}}_S(\scriptF_{i-1}) + C_2^{(i)} \frac{1}{m} D(\Z_{i-1}) R(\W_i, \Z_{i-1}) \quad \forall  i > 1 \\
\hat{\mathcal{G}}_S(\mathcal{F}_1) &\leq ||\W_1||_F ||\X||_F \frac{\sqrt{h_1}}{m} \quad \textrm{Complexity of first layer pre-activation.}
\end{align*}
where the superscript in $C_1^{(i)}$ and $C_2^{(i)}$ indexes the constant $C_1, C_2$ at each step of the chain. In this recursion, we also used the fact that the Lipschitz coefficient of a linear transformation is its spectral (or operator) norm measuring the maximum spread of the matrix; $L(\W_i) = ||\W_i||_2$.

\paragraph{Getting to a closed form.} Following induction until the end of the chain gives us, under the convention that a product over an empty set is $1$ and the sum over an empty set is $0$:
\begin{align*}
\hat{\mathcal{G}}_S(\mathcal{F}_L) &= \underbrace{\left( \prod_{i=2}^L C_1 \mathcal{L}(\phi) \|\W_i\|_2 \right) \hat{\mathcal{G}}_S(\mathcal{F}_1)}_{\text{direct recursive contribution}} \\ 
&+ \underbrace{\sum_{i=2}^L \left( \left(\prod_{j=i+1}^L C_1 \mathcal{L}(\phi) \|\W_j\|_2 \right) C_2 \frac{1}{m} D(\Z_{i-1}) R(\W_i, \Z_{i-1}) \right)}_{\textrm{sum of disturbance contributions}}.
\end{align*}
 
This closed form formula consists of all the directive recursive contributions occurring from the first term in the chain rule $C_1 L(F) G(Y)$ plus a a contribution that is a sum of disturbance for every additional layer we peel that comes from the second term in the chain rule $C_2 D(Y) R(F,Y)$. We have bounded each of the terms in the above recursion in the prior subsections. $\hat{\mathcal{G}}_S(\mathcal{F}_1)$ follows from standard computations (see lemma~\ref{lm:one_layer} in appendix), $D(\Z_{i-1})$ is computed in lemma~\ref{lm:diameter} and $R(\W_i, \Z_{i-1})$ is computed in lemma~\ref{lm:R}.

\paragraph{Putting it all together.} Using the bounds computed in the prior subsections, and using $\mathcal{L}(\phi)\leq 1$ we can now stitch everything together to get the bound of our Theorem (ignoring constants):
\begin{align}
\hat{\mathcal{G}}_S(\mathcal{F}_L) \lesssim \frac{||\X||_F}{m} \left\{ \left( ||\W_1||_F  \sqrt{h_1} \right) \prod_{i=2}^L C_1 \|\W_i\|_2 + \sum_{i=2}^L C_1^{L-i} C_2 2\sqrt{2\mathfrak{r}_i} \kappa_i \|\W_i\|_F \sqrt{h_i} \right\}
\end{align}
where $\kappa_i \propto \left(\prod_{j=1, j \neq i}^d \|\W_j\|_2\right)$ and $\mathfrak{r}_i = \min_{j\leq i} (\rank{\W_j})$. Now if we use the fact that we have spectral norm bounded networks with a maximum layer rank $\leq r$ and layer width $\leq h$, and that the data sample is bounded as $\|\x_i\|_2 \leq R$, our bound simplifies to $\hat{\mathcal{G}}_S(\mathcal{F}_L) \lesssim C_1^L \prod_{i=1}^L \|\W_i\|_2   RLr \sqrt{\frac{h}{m}}$

\section{Discussion} \label{sec:discussion}
\paragraph{Comparison with existing bounds.}
In the previous section we used a chain rule due to Maurer to obtain Gaussian complexity bounds for rank-constrained deep networks. We now compare our result to generalization bounds previously obtained in the literature. We present our bounds as well as the bounds we compare to in Table \ref{tab:bound_comparison}. We find that our bounds compare favorably to existing bounds when we evaluate them on the rank and spectral norm constrained class of deep networks. While our bound contains an unfavorable factor $C_1^L$ we believe this is an artifact of the current proof technique, and may not be truly necessary. We focus our comparison to the other factors in our bound.

\begin{table}[ht]
\centering
\begin{tabular}{|c|c|c|}
\hline 
Source & Original Bound & Bound for networks in $\scriptF_L$\\
\hline
\hline 
Ours (2024) & $\mathcal{O} \left( \frac{\prod_{i=1}^L \|\W_i\|_2 C_1^L  Lr \sqrt{h}}{\sqrt{m}} \right)$ & $\mathcal{O} \left( \prod_{i=1}^L \|\W_i\|_2 \times \frac{C_1^L L r \sqrt{h}}{\sqrt{m}} \right)$ \\ \hline
\citet{golowich2018size} & $\mathcal{O} \left( \frac{\sqrt{L} \prod_{i=1}^{L} \|\W_i\|_F }{\sqrt{m}} \right)$ & $\mathcal{O}  \left(\prod_{i=1}^{L} {\|\W_i\|_2} \times \sqrt{\frac{L r^L}{m}} \right)$ \\ \hline
\citet{DBLP:conf/iclr/NeyshaburBS18} & $\mathcal{O} \left( \prod_{i=1}^L \|\W_i\|_2 \times \sqrt{\frac{L^2 h \sum_{i=1}^L \frac{\|\W_i\|_F^2}{\|\W_i\|_2^2}}{m}} \right) $ & $\mathcal{O} \left( \prod_{i=1}^L \|\W_i\|_2 \times \sqrt{\frac{L^3 r h}{m}} \right) $ \\ 
\hline
\citet{bartlett2017spectrally} & $\mathcal{O} \left( \prod_{i=1}^L \|\W_i\|_2 \times \frac{\left( \sum_{i=1}^L \left( \frac{\|\W_i^\top\|_{2,1}}{\|\W\|_2}\right)^{2/3} \right)^{3/2}}{\sqrt{m}} \right)$ & $\mathcal{O} \left( \prod_{i=1}^L \|\W_i\|_2 \times \sqrt{\frac{L^3 r h }{m}} \right)$ \\ \hline
\end{tabular}
\vspace{2mm}
\caption{Comparison of our results with other complexity bounds for deep networks. For each bound we present both the original formulation and its evaluation on the spectral norm and rank bounded deep network class $\scriptF_L$. We denote the maximum rank of a network layer by $r$, the maximum width of a layer by $h$ and the depth of the network by $L$. We drop the factor related to the maximum norm of the input data. This can be assumed to be a constant.}
\label{tab:bound_comparison}
\end{table}

First we compare our bound to that of \citet{golowich2018size} who obtain norm based generalization bounds through a careful analysis of Rademacher complexity. While their bound only has a mild explicit dependence on the depth $L$, it involves the product of Frobenius norms of the layers, which when evaluated on networks in $\scriptF_L$, results in a bound that scales as $\sqrt{L} r^{L/2}$. In constrast our bound only scales as $Lr\sqrt{h}$ (where $h$ is the width of the layers).

\citet{bartlett2017spectrally} obtain generalization bounds for deep networks by upper bounding the Rademacher complexity using covering numbers. This leads to a bound that depends on the $\|\cdot \|_{2,1}$ norms of the layers. \citet{DBLP:conf/iclr/NeyshaburBS18} obtain a version of this bound through a PAC-Bayesian analysis. Both of these bounds, when evaluated on our function class $\scriptF_L$ scale as $\sqrt{L^3 rh}$. Our bound which scales as $\sqrt{L^2 r^2 h}$ may be better for deeper networks that also find low rank solutions. This tradeoff between depth and rank may be an interesting avenue for future research.

Our main point of contrast is the following. While complexity bounds for low rank deep networks are typically obtained by plugging in low rank matrices in a norm based complexity bound, our approach of applying Maurer's chain rule, and showing that the diameter of the chain of deep networks only depends on the rank of the smallest rank matrix, allows us to avoid the rank factor multiplying across layers, and obtain comparable if not better generalization bounds.

\paragraph{Neural Collapse improves the generalization bound.}
It was recently observed in experiments that deep classifiers become dramatically simple in the top layers in a phenomenon called Neural Collapse \cite{papyan2020prevalence}. The representations of examples belonging to the same class concentrate around the mean vector, the mean vectors spread apart to form a simplex Equiangular Tight Frame (ETF), the weight matrices and mean feature vectors become duals of each other, and the decision of the deep network follows the nearest class center classification rule. More recently, it was also observed that collapse occurs not just in the last layer of a deep network, but in intermediate layers as well \cite{Rangamani2023FeatureLI}. Neural collapse in intermediate layers also leads to low rank layers— in fact the rank becomes precisely $C-1$ where $C$ is the number of classes.

This simplification in the description of the top layers of a deep network must surely provide some benefit in the form of tighter generalization bounds. That is indeed the case. As a consequence of Maurer's chain rule, we can see that rank-$1$ layers ensure that all subsequent layers simply belong to a simple $\R \rightarrow \R$ mapping that is composed with the bottom layers of the deep network. This leads to the following bound on Rademacher complexity for networks with a rank-$1$ layer in the middle.

\begin{theorem}[Theorem 4 of \citet{golowich2018size}]
Let $\mathcal{G}$ be a class of functions from $\R^d$ to $[-R,R]$. Let $\scriptF_{\ell,a}$ be the class of of $\ell$-Lipschitz functions from $[-R,R]$ to $\R$, such that $f(0)=a$ for some fixed $a$. Letting $\scriptF_{\ell,a} \circ \mathcal{G}:=\{f(g(\cdot)):f\in\scriptF_{\ell,a},h\in\Hcal\}$, its Rademacher complexity satisfies
\[
\hat{\scriptR}_S(\scriptF_{\ell,a}\circ\Hcal)~\leq~
c\ell\left(\frac{R}{\sqrt{m}}+\log^{3/2}(m)\cdot\hat{\Rcal}_S(\Hcal)\right)~,
\]
where $c>0$ is a universal constant.
\end{theorem}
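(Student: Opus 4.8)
The plan is to prove this bound -- in which $\mathcal{G}$, $\mathcal{H}$ and $\Hcal$ all denote the same base class, written $\Hcal$ below -- by a chaining argument, since the elementary route through Talagrand's contraction lemma is not available: contraction needs a \emph{single} fixed Lipschitz link, whereas here the scalar map $f$ ranges over the whole class $\scriptF_{\ell,a}$. First I would strip off two normalizations. Writing $f = \hat f + a$ with $\hat f(0)=0$ shows $\scriptF_{\ell,a}\circ\Hcal$ equals $\scriptF_{\ell,0}\circ\Hcal$ shifted by the constant $a$, and shifting every function in a class by a constant leaves the empirical Rademacher complexity unchanged (since $\E_\sigma\sum_i\sigma_i = 0$); so take $a=0$. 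Every element of $\scriptF_{\ell,0}$ is $\ell$ times a $1$-Lipschitz map vanishing at $0$, so positive homogeneity of the Rademacher complexity pulls out the factor $\ell$ and we may take $\ell=1$. It then suffices to prove $\hat{\scriptR}_S(\scriptF_{1,0}\circ\Hcal)\lesssim R/\sqrt m + \log^{3/2}(m)\,\hat{\scriptR}_S(\Hcal)$.

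The core estimate is a bound on the empirical $L_2$ covering numbers of $\scriptF_{1,0}\circ\Hcal$. Fix an $\epsilon$-cover $g_1,\dots,g_N$ of the evaluation set $\{(g(\x_i))_{i\le m}: g\in\Hcal\}\subseteq\R^m$ in the $L_2(S)$ metric: $1$-Lipschitzness of $f$ gives $\|(f(g(\x_i))-f(g_j(\x_i)))_i\|_2\le\|(g(\x_i)-g_j(\x_i))_i\|_2\le\epsilon$ for the nearest $g_j$. For each fixed $g_j$, the set $\{(f(g_j(\x_i)))_i: f\in\scriptF_{1,0}\}$ is $\epsilon$-covered by the sup-norm $\epsilon$-net of $1$-Lipschitz functions $[-R,R]\to\R$ vanishing at $0$, which has $\exp(O(R/\epsilon))$ elements by the standard grid argument. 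Hence $\log\mathcal{N}(\scriptF_{1,0}\circ\Hcal,2\epsilon,L_2(S))\le \log\mathcal{N}(\Hcal,\epsilon,L_2(S)) + O(R/\epsilon)$.

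I would then feed this into Dudley's entropy integral (the class is bounded by $R$, so it has $L_2(S)$-diameter at most $2R$):
\[
\hat{\scriptR}_S(\scriptF_{1,0}\circ\Hcal)\ \lesssim\ \inf_{\alpha>0}\Big(\alpha + \frac{1}{\sqrt m}\int_\alpha^{2R}\sqrt{\log\mathcal{N}(\scriptF_{1,0}\circ\Hcal,\epsilon,L_2(S))}\;d\epsilon\Big).
\]
Using $\sqrt{A+B}\le\sqrt A+\sqrt B$, the $O(R/\epsilon)$ contribution gives $\frac1{\sqrt m}\int_\alpha^{2R}\sqrt{O(R/\epsilon)}\,d\epsilon = O(R/\sqrt m)$, uniformly in $\alpha$ -- this is exactly the additive term in the theorem. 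The remaining contribution $\frac1{\sqrt m}\int_\alpha^{2R}\sqrt{\log\mathcal{N}(\Hcal,\epsilon,L_2(S))}\,d\epsilon$ has to be bounded by $\hat{\scriptR}_S(\Hcal)$, i.e.\ in the ``hard'' direction, since metric entropy normally controls Rademacher complexity only from below; this is the step I expect to be the main obstacle.

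I would resolve it through Sudakov minoration applied to the Gaussian complexity: $\epsilon\sqrt{\log\mathcal{N}(\Hcal,\epsilon,L_2(S))}\lesssim\sqrt m\,\hat{\scriptG}_S(\Hcal)\lesssim\sqrt{m\log m}\,\hat{\scriptR}_S(\Hcal)$, the last step being the standard comparison of Gaussian and Rademacher averages over $m$ points. Hence $\sqrt{\log\mathcal{N}(\Hcal,\epsilon,L_2(S))}\lesssim\sqrt{m\log m}\,\hat{\scriptR}_S(\Hcal)/\epsilon$, the integral is $\lesssim\sqrt{\log m}\,\hat{\scriptR}_S(\Hcal)\,\log(2R/\alpha)$, and choosing $\alpha$ of order $R/m$ makes the free $\alpha$ term $O(R/\sqrt m)$ while $\log(2R/\alpha)=O(\log m)$, producing the $\log^{3/2}(m)\,\hat{\scriptR}_S(\Hcal)$ term; reinstating the factor $\ell$ (with $a$ having dropped out) then yields the stated bound with $c$ the accumulated universal constant. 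An alternative that sidesteps Sudakov is to run the chaining by hand -- decompose each $f$ as a telescoping sum of its piecewise-linear interpolants on dyadically refining grids, each level a geometrically-decaying combination of $1$-Lipschitz ramp functions composed with $\Hcal$ -- trading the minoration inequality for a longer but more explicit accounting of the same logarithmic factors.
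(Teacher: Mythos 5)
This theorem is imported verbatim from Golowich et al.\ (2018) and the present paper offers no proof of its own; your argument is correct and is essentially the original one — reduce to $\ell=1$, $a=0$, bound the covering numbers of the composed class by combining an $L_2(S)$ cover of $\mathcal{H}$ with a sup-norm net of univariate $1$-Lipschitz functions (of log-size $O(R/\epsilon)$), run the refined Dudley entropy integral, and convert the entropy of $\mathcal{H}$ back into $\hat{\mathcal{R}}_S(\mathcal{H})$ via Sudakov minoration for the Gaussian complexity together with the $\sqrt{\log m}$ Gaussian--Rademacher comparison, which is precisely where the $\log^{3/2}(m)$ factor comes from. The only point to tidy is to use one consistent empirical-$L_2$ normalization (normalized $L_2(S)$ versus unnormalized $\ell_2$ on $\mathbb{R}^m$) when combining the two nets and when invoking Sudakov, since your displayed inequalities mix the two conventions.
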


This means that neural collapse in the intermediate layers of a deep network will essentially drop all of the collapsed layers from the generalization bound. The collapsed layers can be collected into the $\R \rightarrow \R$ mapping $\scriptF_{\ell,a}$, and the Rademacher/Gaussian complexity of the entire network only depends on the network mapping from the input to the layer just below the collapsed layers. The depth is effectively decreased, and one can achieve a tighter generalization bound for deep networks that exhibit neural collapse.

\section{Conclusion}\label{sec:conclusion}
In this paper we explore how low rank layers in deep networks affect their generalization beyond norm-based generalization bounds. We apply \citet{maurer2016chain}'s chain rule to obtain gaussian complexity bounds for deep networks with low rank layers where rank factors do not multiply across layers. Our results also point to a possible rank-depth tradeoff for generalization bounds, that may be of further interest. We also identify how networks that show neural collapse have a smaller complexity than networks that do not show neural collapse, and suggest that neural collapse in intermediate layers is favorable for generalization. Studying how other types of alignment across layers can yield better generalization bounds for deep networks is another promising direction for future work.

\paragraph{Limitations.} Our current results contain an undesirable exponential dependence on depth $C_1^L$ which we conjecture is an artifact of the proof technique rather than a fundamental flaw. An analogous situation is the case of norm based generalization bounds from \citet{neyshabur2015norm} that contained a factor with exponential dependence on depth $(2^L)$ which was subsequently removed in later work.

\subsection*{Acknowledgements}
We thank Lorenzo Rosasco, Tomer Galanti, and Pierfrancesco Beneventano for many relevant discussions. This material is based on the work supported by the Center for Minds, Brains and Machines (CBMM) at MIT funded by NSF STC award CCF-1231216. Andrea Pinto was also funded by Fulbright Scholarship.

\bibliographystyle{unsrtnat}
\bibliography{references}

\begin{thebibliography}{29}
\providecommand{\natexlab}[1]{#1}
\providecommand{\url}[1]{\texttt{#1}}
\expandafter\ifx\csname urlstyle\endcsname\relax
  \providecommand{\doi}[1]{doi: #1}\else
  \providecommand{\doi}{doi: \begingroup \urlstyle{rm}\Url}\fi

\bibitem[Maurer(2016{\natexlab{a}})]{maurer2016chain}
Andreas Maurer.
\newblock A chain rule for the expected suprema of gaussian processes.
\newblock \emph{Theoretical Computer Science}, 650:\penalty0 109--122,
  2016{\natexlab{a}}.

\bibitem[He et~al.(2016)He, Zhang, Ren, and Sun]{7780459}
Kaiming He, Xiangyu Zhang, Shaoqing Ren, and Jian Sun.
\newblock Deep residual learning for image recognition.
\newblock In \emph{2016 IEEE Conference on Computer Vision and Pattern
  Recognition (CVPR)}, pages 770--778, 2016.
\newblock \doi{10.1109/CVPR.2016.90}.

\bibitem[Simonyan and Zisserman(2014)]{DBLP:journals/corr/SimonyanZ14a}
Karen Simonyan and Andrew Zisserman.
\newblock Very deep convolutional networks for large-scale image recognition.
\newblock \emph{CoRR}, abs/1409.1556, 2014.

\bibitem[Vaswani et~al.(2017)Vaswani, Shazeer, Parmar, Uszkoreit, Jones, Gomez,
  Kaiser, and Polosukhin]{NIPS2017_3f5ee243}
Ashish Vaswani, Noam Shazeer, Niki Parmar, Jakob Uszkoreit, Llion Jones,
  Aidan~N Gomez, \L~ukasz Kaiser, and Illia Polosukhin.
\newblock Attention is all you need.
\newblock In I.~Guyon, U.~Von Luxburg, S.~Bengio, H.~Wallach, R.~Fergus,
  S.~Vishwanathan, and R.~Garnett, editors, \emph{Advances in Neural
  Information Processing Systems}, volume~30. Curran Associates, Inc., 2017.
\newblock URL
  \url{https://proceedings.neurips.cc/paper/2017/file/3f5ee243547dee91fbd053c1c4a845aa-Paper.pdf}.

\bibitem[Brown et~al.(2020)Brown, Mann, Ryder, Subbiah, Kaplan, Dhariwal,
  Neelakantan, Shyam, Sastry, Askell, Agarwal, Herbert-Voss, Krueger, Henighan,
  Child, Ramesh, Ziegler, Wu, Winter, Hesse, Chen, Sigler, Litwin, Gray, Chess,
  Clark, Berner, McCandlish, Radford, Sutskever, and
  Amodei]{NEURIPS2020_1457c0d6}
Tom Brown, Benjamin Mann, Nick Ryder, Melanie Subbiah, Jared~D Kaplan, Prafulla
  Dhariwal, Arvind Neelakantan, Pranav Shyam, Girish Sastry, Amanda Askell,
  Sandhini Agarwal, Ariel Herbert-Voss, Gretchen Krueger, Tom Henighan, Rewon
  Child, Aditya Ramesh, Daniel Ziegler, Jeffrey Wu, Clemens Winter, Chris
  Hesse, Mark Chen, Eric Sigler, Mateusz Litwin, Scott Gray, Benjamin Chess,
  Jack Clark, Christopher Berner, Sam McCandlish, Alec Radford, Ilya Sutskever,
  and Dario Amodei.
\newblock Language models are few-shot learners.
\newblock In \emph{Advances in Neural Information Processing Systems},
  volume~33, pages 1877--1901. Curran Associates, Inc., 2020.

\bibitem[Silver et~al.(2016)Silver, Huang, Maddison, Guez, Sifre, van~den
  Driessche, Schrittwieser, Antonoglou, Panneershelvam, Lanctot, Dieleman,
  Grewe, Nham, Kalchbrenner, Sutskever, Lillicrap, Leach, Kavukcuoglu, Graepel,
  and Hassabis]{SilverHuangEtAl16nature}
David Silver, Aja Huang, Chris~J. Maddison, Arthur Guez, Laurent Sifre, George
  van~den Driessche, Julian Schrittwieser, Ioannis Antonoglou, Veda
  Panneershelvam, Marc Lanctot, Sander Dieleman, Dominik Grewe, John Nham, Nal
  Kalchbrenner, Ilya Sutskever, Timothy Lillicrap, Madeleine Leach, Koray
  Kavukcuoglu, Thore Graepel, and Demis Hassabis.
\newblock Mastering the game of {Go} with deep neural networks and tree search.
\newblock \emph{Nature}, 529:\penalty0 484--489, 2016.
\newblock ISSN 0028-0836.
\newblock \doi{10.1038/nature16961}.

\bibitem[Chen et~al.(2021)Chen, Tworek, Jun, Yuan, de~Oliveira~Pinto, Kaplan,
  Edwards, Burda, Joseph, Brockman, Ray, Puri, Krueger, Petrov, Khlaaf, Sastry,
  Mishkin, Chan, Gray, Ryder, Pavlov, Power, Kaiser, Bavarian, Winter, Tillet,
  Such, Cummings, Plappert, Chantzis, Barnes, Herbert-Voss, Guss, Nichol,
  Paino, Tezak, Tang, Babuschkin, Balaji, Jain, Saunders, Hesse, Carr, Leike,
  Achiam, Misra, Morikawa, Radford, Knight, Brundage, Murati, Mayer, Welinder,
  McGrew, Amodei, McCandlish, Sutskever, and Zaremba]{chen2021evaluating}
Mark Chen, Jerry Tworek, Heewoo Jun, Qiming Yuan, Henrique~Ponde
  de~Oliveira~Pinto, Jared Kaplan, Harri Edwards, Yuri Burda, Nicholas Joseph,
  Greg Brockman, Alex Ray, Raul Puri, Gretchen Krueger, Michael Petrov, Heidy
  Khlaaf, Girish Sastry, Pamela Mishkin, Brooke Chan, Scott Gray, Nick Ryder,
  Mikhail Pavlov, Alethea Power, Lukasz Kaiser, Mohammad Bavarian, Clemens
  Winter, Philippe Tillet, Felipe~Petroski Such, Dave Cummings, Matthias
  Plappert, Fotios Chantzis, Elizabeth Barnes, Ariel Herbert-Voss,
  William~Hebgen Guss, Alex Nichol, Alex Paino, Nikolas Tezak, Jie Tang, Igor
  Babuschkin, Suchir Balaji, Shantanu Jain, William Saunders, Christopher
  Hesse, Andrew~N. Carr, Jan Leike, Josh Achiam, Vedant Misra, Evan Morikawa,
  Alec Radford, Matthew Knight, Miles Brundage, Mira Murati, Katie Mayer, Peter
  Welinder, Bob McGrew, Dario Amodei, Sam McCandlish, Ilya Sutskever, and
  Wojciech Zaremba.
\newblock Evaluating large language models trained on code, 2021.

\bibitem[Bartlett et~al.(2020)Bartlett, Long, Lugosi, and
  Tsigler]{bartlett2020benign}
Peter~L Bartlett, Philip~M Long, G{\'a}bor Lugosi, and Alexander Tsigler.
\newblock Benign overfitting in linear regression.
\newblock \emph{Proceedings of the National Academy of Sciences}, 117\penalty0
  (48):\penalty0 30063--30070, 2020.

\bibitem[Muthukumar et~al.(2020)Muthukumar, Vodrahalli, Subramanian, and
  Sahai]{muthukumar2020harmless}
Vidya Muthukumar, Kailas Vodrahalli, Vignesh Subramanian, and Anant Sahai.
\newblock Harmless interpolation of noisy data in regression.
\newblock \emph{IEEE Journal on Selected Areas in Information Theory},
  1\penalty0 (1):\penalty0 67--83, 2020.

\bibitem[Neyshabur et~al.(2015)Neyshabur, Tomioka, and
  Srebro]{neyshabur2015norm}
Behnam Neyshabur, Ryota Tomioka, and Nathan Srebro.
\newblock Norm-based capacity control in neural networks.
\newblock In \emph{Conference on learning theory}, pages 1376--1401. PMLR,
  2015.

\bibitem[Bartlett et~al.(2017)Bartlett, Foster, and
  Telgarsky]{bartlett2017spectrally}
Peter~L Bartlett, Dylan~J Foster, and Matus~J Telgarsky.
\newblock Spectrally-normalized margin bounds for neural networks.
\newblock \emph{Advances in neural information processing systems}, 30, 2017.

\bibitem[Poggio et~al.(2020)Poggio, Liao, and Banburski]{poggio2020complexity}
Tomaso Poggio, Qianli Liao, and Andrzej Banburski.
\newblock Complexity control by gradient descent in deep networks.
\newblock \emph{Nature communications}, 11\penalty0 (1):\penalty0 1027, 2020.

\bibitem[Xu et~al.(2023)Xu, Rangamani, Liao, Galanti, and
  Poggio]{xu2023dynamics}
Mengjia Xu, Akshay Rangamani, Qianli Liao, Tomer Galanti, and Tomaso Poggio.
\newblock Dynamics in deep classifiers trained with the square loss:
  Normalization, low rank, neural collapse, and generalization bounds.
\newblock \emph{Research}, 6:\penalty0 0024, 2023.

\bibitem[Golowich et~al.(2018)Golowich, Rakhlin, and Shamir]{golowich2018size}
Noah Golowich, Alexander Rakhlin, and Ohad Shamir.
\newblock Size-independent sample complexity of neural networks.
\newblock In \emph{Conference On Learning Theory}, pages 297--299. PMLR, 2018.

\bibitem[Galanti et~al.(2023)Galanti, Xu, Galanti, and
  Poggio]{Galanti2023NormbasedGB}
Tomer Galanti, Mengjia Xu, Liane Galanti, and Tomaso~A. Poggio.
\newblock Norm-based generalization bounds for sparse neural networks.
\newblock In \emph{Neural Information Processing Systems}, 2023.
\newblock URL \url{https://api.semanticscholar.org/CorpusID:268042263}.

\bibitem[Galanti et~al.(2022)Galanti, Siegel, Gupte, and
  Poggio]{galanti2022characterizing}
Tomer Galanti, Zachary~S Siegel, Aparna Gupte, and Tomaso Poggio.
\newblock Characterizing the implicit bias of regularized sgd in rank
  minimization.
\newblock \emph{CoRR, abs/2206.05794 v6}, 2022.

\bibitem[Huh et~al.(2023)Huh, Mobahi, Zhang, Cheung, Agrawal, and
  Isola]{huh2023lowrank}
Minyoung Huh, Hossein Mobahi, Richard Zhang, Brian Cheung, Pulkit Agrawal, and
  Phillip Isola.
\newblock The low-rank simplicity bias in deep networks.
\newblock \emph{Transactions on Machine Learning Research}, 2023.
\newblock ISSN 2835-8856.
\newblock URL \url{https://openreview.net/forum?id=bCiNWDmlY2}.

\bibitem[Gunasekar et~al.(2017)Gunasekar, Woodworth, Bhojanapalli, Neyshabur,
  and Srebro]{Gunasekar2017ImplicitRI}
Suriya Gunasekar, Blake~E. Woodworth, Srinadh Bhojanapalli, Behnam Neyshabur,
  and Nathan Srebro.
\newblock Implicit regularization in matrix factorization.
\newblock \emph{2018 Information Theory and Applications Workshop (ITA)}, pages
  1--10, 2017.
\newblock URL \url{https://api.semanticscholar.org/CorpusID:3909231}.

\bibitem[Ji and Telgarsky(2020)]{Ji2020DirectionalCA}
Ziwei Ji and Matus Telgarsky.
\newblock Directional convergence and alignment in deep learning.
\newblock \emph{ArXiv}, abs/2006.06657, 2020.
\newblock URL \url{https://api.semanticscholar.org/CorpusID:219573767}.

\bibitem[Gur-Ari et~al.(2018)Gur-Ari, Roberts, and Dyer]{GurAri2018GradientDH}
Guy Gur-Ari, Daniel~A. Roberts, and Ethan Dyer.
\newblock Gradient descent happens in a tiny subspace.
\newblock \emph{ArXiv}, abs/1812.04754, 2018.
\newblock URL \url{https://api.semanticscholar.org/CorpusID:54480858}.

\bibitem[Timor et~al.(2022)Timor, Vardi, and Shamir]{Timor2022ImplicitRT}
Nadav Timor, Gal Vardi, and Ohad Shamir.
\newblock Implicit regularization towards rank minimization in relu networks.
\newblock In \emph{International Conference on Algorithmic Learning Theory},
  2022.
\newblock URL \url{https://api.semanticscholar.org/CorpusID:246430285}.

\bibitem[Papyan et~al.(2020)Papyan, Han, and Donoho]{papyan2020prevalence}
Vardan Papyan, XY~Han, and David~L Donoho.
\newblock Prevalence of neural collapse during the terminal phase of deep
  learning training.
\newblock \emph{Proceedings of the National Academy of Sciences}, 117\penalty0
  (40):\penalty0 24652--24663, 2020.

\bibitem[Rangamani et~al.(2023)Rangamani, Lindegaard, Galanti, and
  Poggio]{Rangamani2023FeatureLI}
Akshay Rangamani, Marius Lindegaard, Tomer Galanti, and Tomaso~A. Poggio.
\newblock Feature learning in deep classifiers through intermediate neural
  collapse.
\newblock In \emph{International Conference on Machine Learning}, 2023.
\newblock URL \url{https://api.semanticscholar.org/CorpusID:259289154}.

\bibitem[Bartlett and Mendelson(2002)]{bartlett2002rademacher}
Peter~L Bartlett and Shahar Mendelson.
\newblock Rademacher and gaussian complexities: Risk bounds and structural
  results.
\newblock \emph{Journal of Machine Learning Research}, 3\penalty0
  (Nov):\penalty0 463--482, 2002.

\bibitem[Chu and Raginsky(2023)]{chu2023chain}
Yifeng Chu and Maxim Raginsky.
\newblock A chain rule for the expected suprema of bernoulli processes.
\newblock \emph{arXiv preprint arXiv:2304.14474}, 2023.

\bibitem[Shalev-Shwartz and Ben-David(2014)]{shalev2014understanding}
Shai Shalev-Shwartz and Shai Ben-David.
\newblock \emph{Understanding machine learning: From theory to algorithms}.
\newblock Cambridge university press, 2014.

\bibitem[Mohri et~al.(2018)Mohri, Rostamizadeh, and
  Talwalkar]{mohri2018foundations}
Mehryar Mohri, Afshin Rostamizadeh, and Ameet Talwalkar.
\newblock \emph{Foundations of machine learning}.
\newblock MIT press, 2018.

\bibitem[Maurer(2016{\natexlab{b}})]{maurer2016vector}
Andreas Maurer.
\newblock A vector-contraction inequality for rademacher complexities.
\newblock In \emph{Algorithmic Learning Theory: 27th International Conference,
  ALT 2016, Bari, Italy, October 19-21, 2016, Proceedings 27}, pages 3--17.
  Springer, 2016{\natexlab{b}}.

\bibitem[Neyshabur et~al.(2018)Neyshabur, Bhojanapalli, and
  Srebro]{DBLP:conf/iclr/NeyshaburBS18}
Behnam Neyshabur, Srinadh Bhojanapalli, and Nathan Srebro.
\newblock A pac-bayesian approach to spectrally-normalized margin bounds for
  neural networks.
\newblock In \emph{6th International Conference on Learning Representations,
  {ICLR} 2018, Vancouver, BC, Canada, April 30 - May 3, 2018, Conference Track
  Proceedings}. OpenReview.net, 2018.
\newblock URL \url{https://openreview.net/forum?id=Skz\_WfbCZ}.

\end{thebibliography}


\newpage
\appendix

\section{Rank-dependent separation for one-layer networks}
We first recall Talagrand's contraction lemma that helps us obtain gaussian complexity bounds for the composition of a function with a fixed $\mathcal{L}$-Lipschitz function $\phi$.

\begin{lemma}[Talagrand's Contraction Lemma] \label{lemma:talagrand}
Let $\phi : \mathbb{R} \mapsto \mathbb{R}$ be an $\mathcal{L}$-Lipschitz function, then $\hat{\mathcal{G}}_S(\phi \circ \mathcal{H}) \leq \mathcal{L} \times \hat{\mathcal{G}}_S(\mathcal{H})$ where $\phi \circ \mathcal{H} = \left\{ z \mapsto \phi(h(z)) : h \in \mathcal{H} \right\}$.
\end{lemma}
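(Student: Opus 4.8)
The plan is to view $\hat{\mathcal{G}}_S(\phi\circ\mathcal{H})$ as a $1/m$ normalization of the expected supremum of the centered Gaussian process $X_h := \sum_{i=1}^m \gamma_i\,\phi(h(\x_i))$ indexed by $h\in\mathcal{H}$, and to dominate it by the process $Y_h := \mathcal{L}\sum_{i=1}^m \gamma_i'\,h(\x_i)$ attached to the un-composed class, where $\gamma'=(\gamma_i')_i$ is an independent standard Gaussian vector. Both processes are Gaussian and mean-zero, so by the Sudakov--Fernique comparison inequality it suffices to check the increment domination $\E[(X_h-X_{h'})^2]\le\E[(Y_h-Y_{h'})^2]$ for every pair $h,h'\in\mathcal{H}$.

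The first step is to compute the increments. By independence of the coordinates, $\E[(X_h-X_{h'})^2]=\sum_{i=1}^m\bigl(\phi(h(\x_i))-\phi(h'(\x_i))\bigr)^2$ and $\E[(Y_h-Y_{h'})^2]=\mathcal{L}^2\sum_{i=1}^m\bigl(h(\x_i)-h'(\x_i)\bigr)^2$. The $\mathcal{L}$-Lipschitz hypothesis on $\phi$ gives $\bigl(\phi(h(\x_i))-\phi(h'(\x_i))\bigr)^2\le\mathcal{L}^2\bigl(h(\x_i)-h'(\x_i)\bigr)^2$ term by term, which is exactly the domination required. Sudakov--Fernique then yields $\E\sup_h X_h\le\E\sup_h Y_h$, and since $\gamma'$ has the same law as $\gamma$ the right-hand side equals $\mathcal{L}\,\E\sup_h\sum_{i=1}^m\gamma_i h(\x_i)=\mathcal{L}\, m\,\hat{\mathcal{G}}_S(\mathcal{H})$. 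Dividing by $m$ gives $\hat{\mathcal{G}}_S(\phi\circ\mathcal{H})\le\mathcal{L}\,\hat{\mathcal{G}}_S(\mathcal{H})$.

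The genuine obstacle here is conceptual rather than computational: the classical Rademacher contraction proof exploits the $\pm1$-valued pairing structure of $\sigma$, which Gaussian weights do not share, so one must route through a Gaussian comparison theorem; once that tool is in place the only real content is the one-line increment check above. Minor points to dispatch are that if $\mathcal{H}$ is infinite the suprema should be read over the image set $\{(h(\x_1),\dots,h(\x_m)):h\in\mathcal{H}\}\subseteq\R^m$ (equivalently, via a separable modification), which is routine, and that $\phi$ need only be Lipschitz, with no differentiability or smoothing required. If one prefers a proof that does not cite Sudakov--Fernique, an equivalent self-contained route is to condition on $(|\gamma_i|)_i$, write $\gamma_i=|\gamma_i|\sigma_i$ with $\sigma$ an independent Rademacher vector, establish the weighted Rademacher contraction $\E_\sigma\sup_h\sum_i a_i\sigma_i\phi(h(\x_i))\le\mathcal{L}\,\E_\sigma\sup_h\sum_i a_i\sigma_i h(\x_i)$ for nonnegative weights $a_i$ by the usual coordinate-by-coordinate symmetrization (symmetrize one coordinate to produce $|h(\x_m)-h'(\x_m)|$, then reabsorb the absolute value using the joint sup over the symmetric pair $(h,h')$), and then set $a_i=|\gamma_i|$ and take expectation over $(|\gamma_i|)_i$.
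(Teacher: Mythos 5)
Your proof is correct. Note that the paper itself gives no argument for this lemma at all: it is stated in the appendix as a standard, cited result (Talagrand's contraction lemma, Gaussian version), so there is no in-paper proof to compare against. Your Sudakov--Fernique route is the canonical way to prove the Gaussian version: the composed process $X_h=\sum_i\gamma_i\phi(h(\x_i))$ and the rescaled process $Y_h=\mathcal{L}\sum_i\gamma_i' h(\x_i)$ are both centered Gaussian, the increment computation $\E[(X_h-X_{h'})^2]=\sum_i(\phi(h(\x_i))-\phi(h'(\x_i)))^2\le \mathcal{L}^2\sum_i(h(\x_i)-h'(\x_i))^2=\E[(Y_h-Y_{h'})^2]$ is exactly the hypothesis of the comparison theorem, and the conclusion gives the bound with the sharp factor $\mathcal{L}$ and no extra absolute constant, matching the statement. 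Your remarks on separability for infinite $\mathcal{H}$ (work with the image set in $\R^m$) and on $\phi$ needing only Lipschitz continuity are the right things to dispatch. The alternative route you sketch (condition on $|\gamma_i|$, write $\gamma_i=|\gamma_i|\sigma_i$, and apply the weighted coordinate-wise Rademacher contraction, absorbing the weights $a_i=|\gamma_i|$ into the per-coordinate contractions $t\mapsto a_i\phi(t)$) is also valid and has the advantage of not invoking Gaussian comparison, at the cost of redoing the Ledoux--Talagrand symmetrization argument; either one is a legitimate self-contained justification of the lemma the paper takes for granted. One small caution: the paper actually applies this contraction inside a vector-valued setting (coordinate-wise activations under the Gaussian complexity of equation~\eqref{eqn:vect-rad}), and your argument extends to that case verbatim since the increments are still dominated coordinate by coordinate, but it is worth saying so explicitly if you intend the lemma to cover that use.
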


We now present for the sake of completeness a straightforward derivation of the Gaussian complexity of a shallow low rank nonlinear network. 

\begin{lemma}[Gaussian complexity of a shallow Lipschitz neural network] \label{lm:one_layer}
Let the class of spectrally bounded shallow Lipschitz network $\mathcal{F} = \{f_\W (\x) = \phi(\W \x) \mid \|\W \|_2 \leq B, \W \in \R^{h \times d} \}$.
The Gaussian complexity of this function class is upper bounded as follow;
\begin{align*}
\hat{\mathcal{G}}_S(\mathcal{F}) \leq \mathcal{L}(\phi) R \sqrt{\frac{h \times \rank{\W}}{m}} ||\W||_2
\end{align*}
where $h$ denotes the width of the weight matrix $\W$, $\X = [\x_1, \ldots, \x_m]^\top$, and $\max_i \|\x_i\| \leq R$.
\end{lemma}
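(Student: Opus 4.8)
The plan is to strip off the activation with Talagrand's contraction lemma and then reduce the remaining linear Gaussian complexity to a second–moment computation for a Gaussian matrix. First I would note that in definition~\eqref{eqn:vect-rad} the object inside the supremum is $\sum_{i=1}^{m}\sum_{j=1}^{h}\gamma_{ji}\,\phi\big((\W\x_i)_j\big)$, i.e.\ $\phi$ is applied \emph{coordinate-wise} to the $hm$ scalars $(\W\x_i)_j$. Regarding the pair $(j,i)$ as a single index over a set of cardinality $hm$, this is exactly the hypothesis of Lemma~\ref{lemma:talagrand}, which therefore yields $\hat{\mathcal{G}}_S(\mathcal{F}) \le \mathcal{L}(\phi)\,\hat{\mathcal{G}}_S(\mathcal{F}^{\mathrm{lin}})$, where $\mathcal{F}^{\mathrm{lin}}=\{\x\mapsto\W\x\}$ is the underlying linear class carrying the same constraints on $\W$.

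Next I would put the linear complexity in matrix form. Collecting the Gaussian variables into $\Gamma\in\R^{h\times m}$ with $\Gamma_{ji}=\gamma_{ji}$, and writing $\X=[\x_1,\dots,\x_m]^\top\in\R^{m\times d}$, a short calculation gives $\sum_{i,j}\gamma_{ji}(\W\x_i)_j=\langle\W,\Gamma\X\rangle$, so $\hat{\mathcal{G}}_S(\mathcal{F}^{\mathrm{lin}})=\tfrac1m\,\E_\Gamma\sup_{\W}\langle\W,\Gamma\X\rangle$. Bounding the inner supremum by Cauchy--Schwarz, $\langle\W,\Gamma\X\rangle\le\|\W\|_F\,\frob{\Gamma\X}$, reduces everything to estimating $\E_\Gamma\frob{\Gamma\X}$. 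A slightly sharper route keeps the rank explicit: over $\{\|\W\|_2\le B,\ \rank{\W}\le r\}$ the supremum equals $B$ times the sum of the $r$ largest singular values of $\Gamma\X$, which is at most $B\sqrt r\,\frob{\Gamma\X}$; this produces the bound directly in the $\|\W\|_2$, $\rank{\W}$ form of the statement.

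Finally I would compute the Gaussian expectation. By Jensen, $\E_\Gamma\frob{\Gamma\X}\le\sqrt{\E_\Gamma\frob{\Gamma\X}^2}$, and since $\E[\Gamma^\top\Gamma]=h\,I_m$ we get $\E_\Gamma\frob{\Gamma\X}^2=\trace{\X^\top\E[\Gamma^\top\Gamma]\X}=h\,\|\X\|_F^2$; using $\|\X\|_F^2=\sum_i\|\x_i\|^2\le mR^2$ then gives $\E_\Gamma\frob{\Gamma\X}\le R\sqrt{hm}$. Combining the three steps, $\hat{\mathcal{G}}_S(\mathcal{F})\le\mathcal{L}(\phi)\cdot\tfrac1m\cdot\|\W\|_F\cdot R\sqrt{hm}=\mathcal{L}(\phi)\,R\,\|\W\|_F\sqrt{h/m}$, and bounding $\|\W\|_F\le\sqrt{\rank{\W}}\,\|\W\|_2$ yields the claimed inequality.

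There is no serious obstacle here; the only points requiring a little care are (i) recognizing that the coordinate-wise action of $\phi$ lets us invoke the scalar contraction lemma over the enlarged index set of size $hm$ with no extra constant, rather than the $\sqrt{2\mathcal{L}}$ vector-contraction inequality, and (ii) the reshaping identity $\sum_{i,j}\gamma_{ji}(\W\x_i)_j=\langle\W,\Gamma\X\rangle$ together with the moment identity $\E[\Gamma^\top\Gamma]=h\,I_m$. Everything else is routine bookkeeping.
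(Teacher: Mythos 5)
Your proposal is correct and follows essentially the same route as the paper's proof: rewrite the linear part as $\langle \W, \Gamma\X\rangle$, apply Cauchy--Schwarz with $\frob{\W}\leq \sqrt{\rank{\W}}\,\|\W\|_2$, bound $\E_\gamma\frob{\Gamma\X}\leq R\sqrt{hm}$ via Jensen and $\E[\Gamma^\top\Gamma]=h\mathbf{I}_m$, and handle $\phi$ by Talagrand's contraction lemma. The only difference is cosmetic — you peel off the activation at the start (over the $hm$ coordinate indices) whereas the paper invokes the contraction step at the end — so the arguments are the same.
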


\begin{proof}
We can write $\hat{\mathcal{G}}_S(\mathcal{F}) = (1/m) \mathbb{E}_\gamma \sup_\W \langle \W, \Gamma \X \rangle$ where $\Gamma$ is a matrix containing the Gaussian variables of our vector-valued Gaussian complexity definition in equation~\eqref{eqn:vect-rad}. By Cauchy-Schwartz, we get that
\begin{align*}
\sup_{\W} \langle \W, \Gamma \X \rangle \leq \frob{\W} \times \frob{\Gamma \X} \leq \sqrt{\rank{\W}} ||\W||_2 || \Gamma \X||_F
\end{align*}

By using the fact that $\E |Y| \leq \sqrt{\E Y^2}$ for any r.v. $Y$ we get:
\begin{equation*}
    \begin{split}
        \E_\gamma \left[ \frob{\Gamma \X} \right] & \leq \sqrt{\E_\gamma \left[ \frob{\Gamma \X}^2 \right]} 
        = \sqrt{\E_\gamma \left[ \trace{ \X^\top \Gamma^\top \Gamma \X } \right]}
        = \sqrt{ \trace{\X \X^\top \E_\gamma \left[ \Gamma^\top \Gamma \right]} } \\
        & = \sqrt{ \trace{\X \X^\top \times h \mathbf{I}_m} }
        \leq R\sqrt{hm}
    \end{split}
\end{equation*}
The bound follows by applying Talagrand's contraction lemma on top of the above bound.
\end{proof}

In particular, for full-rank matrices with bounded spectral norm we obtain that the Gaussian complexity $\hat{\mathcal{G}}_S(\mathcal{F}) \leq \mathcal{L}(\phi) B R \sqrt{hd^* / m}$ where $d^* = \mathrm{min} (d,h)$, whereas the low-rank bound leads to the bound $\mathcal{L}(\phi) B R \sqrt{hr/m} $ with $r \leq d^*$. This shows that the Gaussian complexity of one layer of rank-$r$ linear functions is smaller than that of one layer full rank ($d^*$) functions by a factor of $\sqrt{r/d^*}$.

\section{Rademacher complexity of vector-valued functions}
\begin{theorem}[Rademacher Generalization Bound.]
\label{thm:rad:gen:bound}
Let $\mathcal{G}$ be a family of functions mapping from $\mathcal{Z}$ to $[0, 1]$. Then, for any $\delta > 0$, with probability at least $1 - \delta$ over the draw of an i.i.d. sample $S$ of size $m$, each of the following holds for all $g \in \mathcal{G}$:
\begin{equation*}
\begin{split}
\mathbb{E}[g(z)] &\leq \frac{1}{m} \sum_{i=1}^{m} g(z_i) + 2\scriptR_m(\mathcal{G}) + \sqrt{\frac{\log \frac{1}{\delta}}{2m}} \\
\mathbb{E}[g(z)] &\leq \frac{1}{m} \sum_{i=1}^{m} g(z_i) + 2\hat{\scriptR}_S(\mathcal{G}) + 3\sqrt{\frac{\log \frac{2}{\delta}}{2m}}    
\end{split}
\end{equation*}
\end{theorem}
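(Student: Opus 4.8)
The plan is to treat $\hat{\scriptR}_S(\scriptF_{\ell,a}\circ\mathcal{G})$ as the width of the Rademacher process $Z_{f,g}=\frac1m\sum_{i=1}^m\sigma_i\,f(g(\x_i))$ indexed by pairs $(f,g)\in\scriptF_{\ell,a}\times\mathcal{G}$ (I read the $\Hcal$ appearing in the statement as the same class $\mathcal{G}$). First I would make two harmless reductions. The offset $a$ is irrelevant: writing $\scriptF_{\ell,a}=a+\scriptF_{\ell,0}$, the constant part contributes $\frac{a}{m}\sum_i\sigma_i$ to the process, and $\E_\sigma\frac{a}{m}\sum_i\sigma_i=0$, so $\hat{\scriptR}_S(\scriptF_{\ell,a}\circ\mathcal{G})=\hat{\scriptR}_S(\scriptF_{\ell,0}\circ\mathcal{G})$ and I may take $f(0)=0$. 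Homogeneity of the process in $f$ then lets me rescale to $\ell=1$, reinstating the factor $\ell$ at the end; with $\ell=1$ and $f(0)=0$ every $f$ satisfies $|f(u)|\le|u|\le R$ on $[-R,R]$, which is what makes the intrinsic scale $R$ appear.

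Next I would separate the two sources of complexity. Even if $\mathcal{G}$ were a single function, the Lipschitz class itself contributes: restricted to the $m$ fixed values $g(\x_i)\in[-R,R]$, the $1$-Lipschitz functions vanishing at $0$ have $L_\infty$ metric entropy $\log N(\epsilon)\lesssim R/\epsilon$, so Dudley's entropy integral $\frac{1}{\sqrt m}\int_0^{R}\sqrt{R/\epsilon}\,d\epsilon\lesssim R/\sqrt m$ accounts for the first term. The nontrivial part is the dependence on $g$. Here I would use a multiscale decomposition of $f$: on dyadic grids of $[-R,R]$ with $2^k$ nodes let $f^{(k)}$ be the piecewise-linear interpolant, so $\|f-f^{(k)}\|_\infty\lesssim R\,2^{-k}$ and $f=f^{(0)}+\sum_{k\ge0}(f^{(k+1)}-f^{(k)})$ telescopes. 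Each increment $f^{(k+1)}-f^{(k)}$ is $O(1)$-Lipschitz, so for a \emph{fixed} grid configuration Talagrand's contraction lemma (Lemma~\ref{lemma:talagrand}) peels off the scalar map and leaves $\hat{\scriptR}_S(\mathcal{G})$.

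At a given scale the supremum over $f$ reduces to a supremum over the finitely many interpolation patterns on that grid; controlling it costs a maximal-inequality factor $O(\sqrt{\log(\#\textrm{nodes})})=O(\sqrt{\log m})$ per scale, since the Rademacher increments are sub-Gaussian. I would truncate the chain at resolution $\sim R/\sqrt m$, where the approximation residual already falls below the first term and is absorbed there, capping the number of active scales at $O(\log m)$. Summing the $O(\log m)$ scales, each contributing $O(\sqrt{\log m})\,\hat{\scriptR}_S(\mathcal{G})$ after contraction, yields the factor $\log^{3/2}(m)\,\hat{\scriptR}_S(\mathcal{G})$, and reinstating $\ell$ produces exactly $c\,\ell\big(R/\sqrt m+\log^{3/2}(m)\,\hat{\scriptR}_S(\mathcal{G})\big)$.

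The main obstacle is the decoupling of the two suprema: the class $\scriptF_{\ell,a}$ is infinite-dimensional, so one cannot simply fix $f$ and apply contraction, yet a single-net union bound over a cover of $\scriptF_{\ell,a}$ is too lossy and would inflate $\hat{\scriptR}_S(\mathcal{G})$ by a polynomial rather than a polylog factor. The multiscale chaining is precisely what keeps $\hat{\scriptR}_S(\mathcal{G})$ appearing linearly, and the delicate bookkeeping needed to pin down the exponent $3/2$ — one $\log m$ from the number of scales and one $\sqrt{\log m}$ from the per-scale maximal inequality — is where the real work lies. This is exactly the content of Theorem 4 of \citet{golowich2018size}, so I would either carry out the chaining as above or invoke their proof directly.
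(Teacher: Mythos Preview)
Your proposal addresses the wrong statement. The theorem you were asked to prove is the classical Rademacher complexity generalization bound: for a class $\mathcal{G}$ of $[0,1]$-valued functions and an i.i.d.\ sample $S$ of size $m$, with probability at least $1-\delta$,
\[
\E[g(z)] \le \frac{1}{m}\sum_{i=1}^m g(z_i) + 2\scriptR_m(\mathcal{G}) + \sqrt{\frac{\log(1/\delta)}{2m}}
\quad\text{and}\quad
\E[g(z)] \le \frac{1}{m}\sum_{i=1}^m g(z_i) + 2\hat{\scriptR}_S(\mathcal{G}) + 3\sqrt{\frac{\log(2/\delta)}{2m}}.
\]
Instead, your entire write-up is a proof sketch for a \emph{different} result that appears elsewhere in the paper --- Theorem~4 of \citet{golowich2018size}, which bounds $\hat{\scriptR}_S(\scriptF_{\ell,a}\circ\mathcal{G})$ for compositions with scalar Lipschitz functions and produces the $\log^{3/2}(m)$ factor. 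Nothing in the target statement involves $\scriptF_{\ell,a}$, the interval $[-R,R]$, multiscale interpolation, or Dudley chaining over Lipschitz classes; the objects you manipulate simply do not appear.

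The paper does not supply its own proof of this theorem: it is quoted as a standard result (e.g.\ \citet{mohri2018foundations, shalev2014understanding}). The intended argument is the textbook one: apply McDiarmid's bounded-differences inequality to the random variable $\Phi(S)=\sup_{g\in\mathcal{G}}\big(\E[g]-\frac{1}{m}\sum_i g(z_i)\big)$, which has bounded differences $1/m$ since $g\in[0,1]$, to get concentration around $\E_S[\Phi(S)]$; then bound $\E_S[\Phi(S)]\le 2\scriptR_m(\mathcal{G})$ by the usual ghost-sample symmetrization. The second inequality follows from the first after a further application of McDiarmid to $\hat{\scriptR}_S(\mathcal{G})$ (hence the $\log(2/\delta)$ from a union bound and the constant $3$). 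None of the chaining machinery you describe is relevant here.
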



\begin{theorem}[Vector-valued contraction on Rademacher generalization bound]
Let $\mathcal{G}$ be a family of Lipschitz functions mapping from $\mathcal{Z}$ to $[0, 1]$ with Lipschitz constant $L$ and let $\mathcal{F}$ be a family of neural networks mapping from $\mathcal{X}$ to $\mathcal{Z}$. Then, for any $\delta > 0$, with probability at least $1 - \delta$ over the draw of an i.i.d. sample $S$ of size $m$, each of the following holds for all $g \in \mathcal{G}$ and $f \in \mathcal{F}$:
\begin{equation*}
\begin{split}
\mathbb{E}[g(f(x))] & \leq \frac{1}{m} \sum_{i=1}^{m} g(f(x_i)) + 2\sqrt{2L}\hat{\mathcal{R}}_S(\mathcal{F}) + 3\sqrt{\frac{\log \frac{2}{\delta}}{2m}}
\end{split}
\end{equation*}
\end{theorem}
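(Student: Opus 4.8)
The plan is to chain together two standard ingredients that are already available in the excerpt: the scalar Rademacher generalization bound (the second inequality of Theorem~\ref{thm:rad:gen:bound}) applied to the composed class $\mathcal{G}\circ\mathcal{F} := \{z \mapsto g(f(z)) : g\in\mathcal{G}, f\in\mathcal{F}\}$, and the vector-valued contraction inequality for Rademacher complexity due to \citet{maurer2016vector} (the Rademacher analogue of the contraction inequality quoted just before Theorem~\ref{thm:contraction:rad}). First I would observe that the composed class $\mathcal{G}\circ\mathcal{F}$ maps $\mathcal{X}$ into $[0,1]$, so Theorem~\ref{thm:rad:gen:bound} applies verbatim and gives, with probability at least $1-\delta$,
\begin{equation*}
\mathbb{E}[g(f(x))] \leq \frac{1}{m}\sum_{i=1}^m g(f(x_i)) + 2\hat{\scriptR}_S(\mathcal{G}\circ\mathcal{F}) + 3\sqrt{\frac{\log\frac{2}{\delta}}{2m}}.
\end{equation*}

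The second step is to bound $\hat{\scriptR}_S(\mathcal{G}\circ\mathcal{F})$ by $\sqrt{2L}\,\hat{\scriptR}_S(\mathcal{F})$. This is exactly the vector contraction inequality: writing out $\hat{\scriptR}_S(\mathcal{G}\circ\mathcal{F}) = \frac{1}{m}\mathbb{E}_\sigma \sup_{f,g} \sum_i \sigma_i g(f(x_i))$, since each $g$ is $L$-Lipschitz on $\mathcal{Z}$, Maurer's inequality yields $\mathbb{E}_\sigma \sup \sum_i \sigma_i g(f(x_i)) \leq \sqrt{2L}\,\mathbb{E}_\sigma \sup_f \sum_{i,j}\sigma_{ij} f_j(x_i)$, and dividing by $m$ gives $\hat{\scriptR}_S(\mathcal{G}\circ\mathcal{F}) \leq \sqrt{2L}\,\hat{\scriptR}_S(\mathcal{F})$, where $\hat{\scriptR}_S(\mathcal{F})$ is understood in the vector-valued sense analogous to~\eqref{eqn:vect-rad}. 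Substituting this into the display above produces the claimed bound with the factor $2\sqrt{2L}$ in front of $\hat{\mathcal{R}}_S(\mathcal{F})$ and the same $3\sqrt{\log(2/\delta)/2m}$ confidence term.

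One technical point worth handling carefully: Theorem~\ref{thm:rad:gen:bound} is stated for a fixed function class $\mathcal{G}$ mapping to $[0,1]$, and here we need it to hold uniformly over both $g\in\mathcal{G}$ and $f\in\mathcal{F}$ simultaneously. This is resolved simply by taking the "function class" in Theorem~\ref{thm:rad:gen:bound} to be the composed class $\mathcal{G}\circ\mathcal{F}$ itself — the uniform convergence statement then automatically ranges over all $g\circ f$ with $g\in\mathcal{G}$, $f\in\mathcal{F}$. The only other thing to verify is that the hypotheses of Maurer's vector contraction are met, namely that every $g\in\mathcal{G}$ is $L$-Lipschitz as a map $\mathcal{Z}\to\mathbb{R}$ (with respect to the Euclidean norm on $\mathcal{Z}$), which is precisely the assumption in the statement.

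The main obstacle, such as it is, is bookkeeping rather than mathematics: one must be consistent about whether $\hat{\mathcal{R}}_S(\mathcal{F})$ denotes the scalar-style quantity or the vector-valued quantity with Rademacher variables indexed by both sample point and output coordinate, since the contraction inequality trades a scalar Rademacher average of the composition for a vector-valued Rademacher average of $\mathcal{F}$. Once that convention is fixed (matching~\eqref{eqn:vect-rad}), the proof is a two-line substitution with no further estimates required.
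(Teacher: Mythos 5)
Your proposal is correct and follows essentially the same route as the paper: the authors also invoke the scalar bound of Theorem~\ref{thm:rad:gen:bound} (with the supremum effectively over the composed class $g \circ f$) and then apply Maurer's vector-valued contraction to pass from the scalar Rademacher average of the composition to $\sqrt{2L}$ times the vector-valued Rademacher complexity of $\mathcal{F}$. Your explicit remark about applying the base theorem to $\mathcal{G}\circ\mathcal{F}$ and about the scalar-versus-vector bookkeeping only makes precise what the paper's proof does implicitly.
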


\begin{proof}
From Theorem \ref{thm:rad:gen:bound} we have that;
\begin{equation*}
\mathbb{E}[g(f(x))] \leq \frac{1}{m} \sum_{i=1}^{m} g(f(x_i)) + 2\hat{\mathcal{R}}_S(\mathcal{G}) + 3\sqrt{\frac{\log \frac{2}{\delta}}{2m}} 
\end{equation*}

Using Maurer's result \cite{maurer2016vector} on vector-valued Rademacher contraction we can write;
\begin{equation*}
\begin{split}
\mathbb{E}[g(f(x))]
&\leq \frac{1}{m} \sum_{i=1}^{m} g(f(x_i)) + 2\hat{\mathcal{R}}_S(\mathcal{G}) + 3\sqrt{\frac{\log \frac{2}{\delta}}{2m}} \\
&= \frac{1}{m} \sum_{i=1}^{m} g(f(x_i)) + \frac{2}{m} \mathbb{E}_\sigma \left[ \sup_{g \circ f} \sum_i \sigma_i g(f(x_i)) \right] + 3\sqrt{\frac{\log \frac{2}{\delta}}{2m}} \\
&\leq \frac{1}{m} \sum_{i=1}^{m} g(f(x_i)) + \frac{2}{m} \sqrt{2L} \mathbb{E}_\sigma \left[ \sup_{f} \sum_{i,k} \sigma_{ik} f_k(x_i) \right] + 3\sqrt{\frac{\log \frac{2}{\delta}}{2m}} \\
&= \frac{1}{m} \sum_{i=1}^{m} g(f(x_i)) + 2\sqrt{2L}\hat{\mathcal{R}}_S(\mathcal{F})  + 3\sqrt{\frac{\log \frac{2}{\delta}}{2m}}
\end{split}
\end{equation*}
\end{proof}

Recent advances on Rademacher complexities derived a theorem for the chain rule:

\begin{theorem}[Rademacher Chain Rule \cite{chu2023chain}]
Let a countable
\footnote{Attention is restricted to countable classes in order to avoid dealing with various measure-theoretic technicalities which can be handled in a standard manner by eg. imposing separability assumption on the relevant classes and processes.}
class $\mathcal{F}$ of functions $f : \mathbb{R}^k \rightarrow \mathbb{R}$ and a countable set $T \subset \mathbb{R}^{k \times n}$ be given. Assume that all $f \in \mathcal{F}$ are uniformly Lipschitz with respect to the Euclidean norm i.e. $\sup_{f \in \mathcal{F}} \|f\|_{\text{Lip}} \leq L < \infty$. Assume also that $\mathcal{R}(T) < \infty$ and $\mathcal{R}(\mathcal{F}(T)) < \infty$. Then there exists a set $S \subset \mathbb{R}^{k \times n}$ with $\mathcal{G}(S) \leq \mathcal{R}(T)$ and $\|S\|_{\infty, \infty} \leq \|T\|_{\infty, \infty}$ such that
\[
\mathcal{R}(\mathcal{F}(T)) \leq L_{\mathcal{F}} \mathcal{R}(T) + \Delta_2(S) \tau^b (\mathcal{F}, S) + \mathcal{R}(\mathcal{F}(t))
\]
for any $t \in S$, where $\Delta_2(S)$ is the diameter of $S$ with respect to the Frobenius norm, and where
\[
\tau(\mathcal{F}, S) := \sup_{s,t \in S} \frac{\mathbb{E} \left[ \sup_{f \in \mathcal{F}} \sum_{i=1}^n \sigma_i |f(s_i) - f(t_i)| \right]}{\|s - t\|_2}
\]
\end{theorem}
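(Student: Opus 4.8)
The plan is to mirror the architecture of Maurer's Gaussian chain rule (Theorem~\ref{thm:maurer:chain:rule}) while working around the fact that Rademacher processes do not enjoy the Gaussian comparison inequalities (Slepian, Sudakov--Fernique, Gordon) that underpin Maurer's original argument. The device that makes this possible is precisely the auxiliary set $S$: one first ``Gaussianizes'' the inner set $T$, absorbing the cost of doing so into the quantity $\mathcal{G}(S)$ together with the $\ell_\infty$ bookkeeping $\|S\|_{\infty,\infty}\le\|T\|_{\infty,\infty}$, and only then runs a Maurer-style decomposition over $S$.

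First I would symmetrize and set up the Rademacher process, writing $n\,\mathcal{R}(\mathcal{F}(T)) = \mathbb{E}_\sigma \sup_{f\in\mathcal{F},\,t\in T}\sum_{i=1}^n \sigma_i f(t_i)$, and fix a reference point $t_\star$. Splitting $f(t_i) = f(t_{\star,i}) + \bigl(f(t_i)-f(t_{\star,i})\bigr)$ peels off the base term $\mathcal{R}(\mathcal{F}(t_\star))$ and leaves the residual process $\mathbb{E}_\sigma\sup_{f,t}\sum_i \sigma_i\bigl(f(t_i)-f(t_{\star,i})\bigr)$. For this residual I would run a single chaining step on the $t$-variable: bound the ``within-cell'' part by $L_{\mathcal{F}}$ times the complexity of $T$ via the vector contraction inequality (the same mechanism behind Lemma~\ref{lm:R} and Talagrand's lemma, Lemma~\ref{lemma:talagrand}), and bound the ``net-to-net'' part by $\Delta_2(\cdot)\,\tau^{b}(\mathcal{F},\cdot)$. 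The truncation in $\tau^b$ is forced here because, unlike in the Gaussian case, large increments $|f(s_i)-f(t_i)|$ cannot be absorbed by a clean comparison and must be split off and controlled separately after symmetrization, which is also what introduces the absolute values into $\tau$.

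The crux is the passage from Rademacher to Gaussian randomness that legitimizes the comparison step. Here I would invoke the fact that any finite $T$ admits a set $S$, of no larger $\ell_\infty$ radius, whose Gaussian complexity is dominated by the Rademacher complexity of $T$ --- morally, $S$ records the geometry of $T$ ``as seen by Rademacher signs'', which is what lets $\mathcal{G}(S)\le\mathcal{R}(T)$ hold without the spurious $\sqrt{\log n}$ loss of the naive bound $\mathcal{G}(T)\lesssim\sqrt{\log n}\,\mathcal{R}(T)$ --- yet is still rich enough that the disturbance term over $S$ is governed by $\tau^b(\mathcal{F},S)$. Once the residual process is re-expressed over $S$, Maurer's Gaussian chain-rule machinery (or directly a Gordon/Slepian comparison) applies and produces exactly the three terms $L_{\mathcal{F}}\mathcal{R}(T)$, $\Delta_2(S)\,\tau^b(\mathcal{F},S)$, and $\mathcal{R}(\mathcal{F}(t))$.

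I expect the construction and control of $S$ to be the main obstacle: it must simultaneously satisfy $\mathcal{G}(S)\le\mathcal{R}(T)$, $\|S\|_{\infty,\infty}\le\|T\|_{\infty,\infty}$, and retain enough of the geometry of $T$ that the chaining step over $S$ still sees the disturbance as $\tau^b(\mathcal{F},S)$. Everything else should reduce to routine symmetrization, the vector contraction inequality, and the Maurer-style bookkeeping already used in the Gaussian case; a secondary technical point is tuning the truncation level in $\tau^b$ so that the truncated and untruncated pieces of the increment process both remain within the $\ell_\infty$ budget on $S$.
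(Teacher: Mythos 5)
The paper itself does not prove this statement: it is quoted verbatim as an imported result from \cite{chu2023chain}, so there is no internal proof to compare against; the relevant benchmark is the argument in that source.

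Measured against that, your proposal has the right silhouette (replace the unavailable Gaussian comparison step by passing to an auxiliary set $S$, then run Maurer-style bookkeeping), but it leaves the actual content of the theorem unproved. The existence of a set $S$ with $\mathcal{G}(S)\leq \mathcal{R}(T)$ and $\|S\|_{\infty,\infty}\leq\|T\|_{\infty,\infty}$ that still ``sees'' the increments of $\mathcal{F}$ is not something one can simply ``invoke'': in the source it comes from the Latała--Bednorz decomposition theorem for Bernoulli processes (the resolution of the Bernoulli conjecture), which splits $T$ into a Gaussian-dominated component (this becomes $S$, with its Gaussian complexity controlled by the Bernoulli/Rademacher complexity of $T$ --- exactly how the $\sqrt{\log n}$ loss is avoided) and an $\ell_1$-bounded component controlled via the sup-norm budget and the Lipschitz property of $\mathcal{F}$. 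Your sketch instead attributes the construction of $S$ to ``a single chaining step'' plus symmetrization and vector contraction, and attributes the truncation in $\tau^b$ and the absolute values to a net-to-net chaining issue; in the actual proof the truncated quantity arises from handling the $\ell_1$-controlled part of the decomposition, not from a chaining net. Since you explicitly defer the construction of $S$ as ``the main obstacle,'' the proposal amounts to reducing the theorem to its hardest ingredient without identifying or supplying it, so as written it does not constitute a proof.
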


\end{document}